\documentclass{article} 
\usepackage{iclr2027_conference,times}

\usepackage{amsmath,amsfonts,bm}

\def\eqref#1{equation~\ref{#1}}

\def\1{\bm{1}}

\DeclareMathAlphabet{\mathsfit}{\encodingdefault}{\sfdefault}{m}{sl}
\SetMathAlphabet{\mathsfit}{bold}{\encodingdefault}{\sfdefault}{bx}{n}

\usepackage{hyperref}
\usepackage{url}
\usepackage{amsmath}
\usepackage{amssymb}
\usepackage{booktabs}
\usepackage{amsthm}
\usepackage{pgfplots}
\pgfplotsset{compat=1.16}
\usepackage{tikz}
\usetikzlibrary{arrows.meta,positioning,decorations.pathreplacing,calc}
\usepackage{algorithm}
\usepackage{algorithmic}

\newtheorem{proposition}{Proposition}

\newtheorem{corollary}{Corollary}
\newtheorem{remark}{Remark}

\title{Large Language Bayes Is Not \\ Reparameterisation-Invariant}

\author{Jian Xu \\
RIKEN \\
\texttt{jian.xu@riken.jp}}

\iclrfinalcopy

\begin{document}

\pagestyle{plain}
\fancyhf{}
\renewcommand{\headrulewidth}{0pt}

\maketitle
\thispagestyle{plain}
\lhead{}\chead{}\rhead{}

\begin{abstract}
Large Language Bayes (LLB) answers an informal modelling question by sampling candidate
probabilistic programs from a language model, running approximate inference on each, and
averaging them with weights proportional to an exponentiated evidence bound. We show that
this weighting depends on how a model is written. The log marginal likelihood is invariant
to reparameterisation; the evidence bound is not. On eight schools the centered and
non-centered programs are the same measure to $5.7\times10^{-14}$, yet their weights differ
by $6.1\times$; importance weighting reduces this only to $2.2\times$, and reproducing the
inference LLB actually runs, a full-covariance Gaussian matched to the posterior moments,
still leaves $1.9\times$ on eight schools and $8.9\times$ in $64$ dimensions. Across
likelihood families, dimensions and funnel severities the discrepancy reaches $31.9\times$
and reverses sign, so no single writing is uniformly preferable. It inverts Bayes factors
against eight natural competitors, and the induced error in the model posterior, and in any downstream target, is controlled by the spread $\Delta$ of the bound shortfalls through a known sharp Hilbert-distance bound. Across $360$ programs from six
language models the parameterisation written ranges from $0\%$ to $100\%$ centered and is
stable within a model. Detecting equivalent programs statistically can falsely merge
genuinely different models at practical sample budgets; verifying reparameterisations we
generate ourselves cannot, and closes the window.
\end{abstract}

\section{Introduction}

Specifying a probabilistic model is hard, and it is the step that most often stands
between a practitioner and a Bayesian answer. \citet{domke2025llb} proposes an appealing
way around it: describe the problem in natural language, let a language model write
candidate probabilistic programs, run approximate inference on each, and combine the
per-program posteriors into a single answer. Formally, a joint distribution is defined
over programs, data and targets; conditioning on data and marginalising over programs
gives a posterior that reflects the modelling uncertainty rather than hiding it. The
combination step is a weighted average, with weights proportional to each program's
marginal likelihood.

This paper is about the weights. In the idealised derivation the weight of program $m$
is $p(m \mid t)\,p(x \mid m)$, where $p(x\mid m)$ is the marginal likelihood. In the algorithm that is actually run, sampling programs from the language model and self-normalising makes $p(m\mid t)$ cancel, so it never has to be evaluated, while $p(x\mid m)$ is replaced by $\exp L^{(n)}$ with $L^{(n)}$ a variational \emph{lower bound}. That substitution looks innocuous: bounds are what one
always has, and a bound that is uniformly a little loose would only rescale everything.

It is not innocuous, because the looseness is not uniform. The marginal likelihood is
invariant to reparameterisation: if two programs describe the same model in different
coordinates, they have exactly the same $p(x\mid m)$. Variational bounds have no such
invariance. Mean-field variational inference is a statement about a particular coordinate
system, and its quality can differ enormously between coordinate systems that describe
the same measure. The centered and non-centered parameterisations of a hierarchical model
are the textbook example \citep{papaspiliopoulos2007,betancourt2015hier}: identical
models, very different behaviour under approximate inference.

The consequence is that LLB assigns different weights to the same model depending on how
the language model happened to write it. We make this concrete and quantify it, and then
ask the two questions that determine whether it matters: does it change conclusions, and
does it actually happen?

\paragraph{Contributions.}
\begin{itemize}
\item \textbf{An inconsistency, not a disagreement about priors.} What LLB approximates is
invariant to how a model is written; the approximation is not
(Section~\ref{sec:inconsistency}). On eight schools two programs that are the same measure
to $5.7\times10^{-14}$ differ by $6.1\times$ in weight, and the discrepancy survives both
importance weighting and the moment-matched proposal LLB actually uses ($1.9\times$ on
eight schools, $8.9\times$ at $J=64$).
\item \textbf{What it costs.} The gap opens a window in which Bayes factors invert, with
eight natural competitors inside it, and controls the model-posterior error through the spread $\Delta$ of the bound shortfalls, which we identify as the Hilbert distance between the true and reported weights, so a known sharp bound applies (Appendix~\ref{app:cost}). Across families, dimensions and
funnel severities it reaches $31.9\times$ and reverses sign, so no single writing is
uniformly preferable.
\item \textbf{It fires often, and which way depends on the model.} Over $360$ programs from
six language models the parameterisation written ranges from $0\%$ to $100\%$ centered and
is stable within a model (Section~\ref{sec:frequency}). It weakens with model recency without vanishing, and a model that splits evenly makes the weight of a fixed hypothesis vary between runs.
\item \textbf{A repair via verification, not detection.} Statistical detection can merge
genuinely different models. Generating reparameterisations from symbolically verified
templates and checking them numerically cannot, and closes the window
(Section~\ref{sec:verification}).
\end{itemize}

\section{Related work}
\label{sec:related}

\paragraph{Language models that write statistical models.} \citet{li2024automated} drive
Box's loop with a language model; \citet{domke2025llb} makes the construction Bayesian, so
the program becomes a latent variable and the answer marginalises over it, weighted by each
program's marginal likelihood. Others weight LLM-proposed models the same way, including the
sequential Monte Carlo scheme of \citet{wahl2026modelsmc}, so the issue attaches to ranking
LLM-written programs by an estimate of their evidence rather than to one system. Whether such programs fit the data at all is a separate question \citep{xu2026calibration}; we ask why two programs that are the \emph{same} model receive different weights. That the accuracy of the evidence estimate matters for the model posterior and for downstream targets is analysed in \citet{domke2025llb} itself. What we add is that one component of that error is not estimation noise but a broken invariance: it is fixed by how the program is written, does not shrink as the estimator is given more computation, is systematic rather than mean-zero across candidates, and is produced at a measurable rate by the language models that write the programs.

\paragraph{Parameterisation and approximate inference.} That centered and non-centered
hierarchies behave differently under approximate inference is long established
\citep{papaspiliopoulos2007,betancourt2015hier}, including for variational inference
\citep{tan2013partially}, and automatic reparameterisation exploits it \citep{gorinova2020}.
That literature asks which parameterisation infers better for a fixed model. Our point is
that a coordinate-dependent bound used as a \emph{model weight} destroys an invariance of
the quantity it estimates, changing which model is selected.

\paragraph{Evidence bounds and model selection.} Variational bounds and their
importance-weighted refinements are standard
\citep{jordan1999vi,blei2017vi,zhang2019advances,kingma2014vae,burda2016iwae,domke2018iwvi};
\citet{cremer2018suboptimality} decompose the gap to the true log evidence and
\citet{rainforth2018tighter} show that tightening is not automatically beneficial. We use a
different consequence: the gap costs invariance, and it is its \emph{spread} across
candidates that corrupts comparison. Slow closure under heavy tails is documented
\citep{vehtari2024pareto}. Bayes factors are classical \citep{kass1995bayesfactors} with
limitations debated \citep{lotfi2022marginal}; criteria avoiding an evidence estimate
\citep{yao2018stacking,vehtari2017loo} sidestep the issue by construction. The setting sits
inside amortised and simulation-based inference \citep{cranmer2020frontier,zhu2026mtbicl}
and the workflow it automates \citep{gelman2020workflow}.

\section{Background}

\subsection{Large Language Bayes}

Given informal text $t$, LLB \citep{domke2025llb} defines a joint distribution over a
formal model $m$, observed data $x$ and targets $z$, and targets the posterior
\begin{equation}
p(z \mid x, t) \;=\; \sum_m p(m \mid x, t)\, p(z \mid x, m),
\qquad
p(m \mid x, t) \;\propto\; p(m \mid t)\, p(x \mid m).
\label{eq:llb}
\end{equation}
Sampling $m^{(1)},\dots,m^{(N)} \sim p(m\mid t)$ from the language model and
self-normalising turns \eqref{eq:llb} into an importance sampler in which the proposal
$p(m\mid t)$ cancels, leaving weights proportional to $p(x\mid m^{(n)})$. Because the
marginal likelihood is itself intractable, the implementation uses
$w^{(n)} \propto \exp L^{(n)}$ with $L^{(n)}$ a bound on $\log p(x\mid m^{(n)})$.

\begin{figure}[t]
\centering
\begin{tikzpicture}[
  prog/.style={draw,rounded corners=2pt,align=center,font=\scriptsize,
               minimum height=8mm,minimum width=21mm,fill=blue!4},
  op/.style={draw,rounded corners=2pt,align=center,font=\tiny,
             minimum height=8mm,minimum width=21mm,fill=gray!8},
  val/.style={align=center,font=\scriptsize},
  arr/.style={-{Stealth[length=4pt]},gray!55!black,thin}]

\node[val,font=\scriptsize\bfseries,anchor=west] at (-0.4,0.95) {LLB as published};
\node[prog] (p1) at (1.1,0.15) {the model written\\\emph{centered}};
\node[prog] (p2) at (1.1,-1.15) {the model written\\\emph{non-centered}};
\node[op] (v1) at (4.0,0.15) {moment-matched\\full-cov Gaussian};
\node[op] (v2) at (4.0,-1.15) {moment-matched\\full-cov Gaussian};
\node[val] (b1) at (6.5,0.15) {$\mathcal{L}_K = -31.98$};
\node[val] (b2) at (6.5,-1.15) {$\mathcal{L}_K = -31.33$};
\node[val] (w1) at (9.3,0.15) {weight $\;w$};
\node[val] (w2) at (9.3,-1.15) {weight $1.93\,w$};
\foreach \a/\b in {p1/v1,p2/v2,v1/b1,v2/b2,b1/w1,b2/w2} \draw[arr] (\a)--(\b);

\draw[decorate,decoration={brace,amplitude=4pt,mirror},gray!60!black]
  (-0.15,-1.62) -- (-0.15,0.62);
\node[val,rotate=90,anchor=south,gray!35!black] at (-0.42,-0.5)
  {\begin{tabular}{c}same\\measure\end{tabular}};
\node[val,anchor=west,red!65!black,font=\scriptsize] at (10.35,-0.5)
  {\begin{tabular}{l}same $\log p(x)$,\\different weight\end{tabular}};
\draw[decorate,decoration={brace,amplitude=4pt},red!55!black]
  (10.25,0.42) -- (10.25,-1.42);

\draw[gray!35,dashed] (-0.55,-1.95) -- (11.9,-1.95);
\node[val,font=\scriptsize\bfseries,anchor=west] at (-0.4,-2.3) {with Algorithm~\ref{alg:repair}};
\node[prog] (q1) at (1.1,-3.1) {either writing};
\node[op] (t1) at (4.0,-3.1) {templates\\$+$ \textbf{verify}};
\node[op] (m1) at (6.5,-3.1) {$\max_r \mathcal{L}_K^{m_r}$};
\node[val] (r1) at (9.3,-3.1) {weight $1.93\,w$};
\foreach \a/\b in {q1/t1,t1/m1,m1/r1} \draw[arr] (\a)--(\b);
\node[val,anchor=west,green!35!black,font=\scriptsize] at (10.35,-3.1)
  {\begin{tabular}{l}invariant\end{tabular}};
\end{tikzpicture}
\caption{Where the inconsistency enters and where the repair goes, shown for the inference
procedure LLB actually runs (Section~\ref{sec:aspublished}, $K=20$). \textbf{Top:} two
programs encoding the same model, hence with identical $\log p(x)$, receive different
bounds, so a hypothesis is weighted by how it was written. \textbf{Bottom:} the repair
generates reparameterisations from templates, verifies them against the
change-of-variables identity, and keeps the tightest verified bound.}
\label{fig:pipeline}
\end{figure}

A different objection one might raise is that a language model's tendency to express some
hypotheses in many guises inflates their proposal mass. That is a question about whether
$p(m\mid t)$ is a desirable prior, and the importance sampler stays correct under it; our
concern does not depend on that debate. Appendix~\ref{app:limitations} separates the two.

\subsection{Evidence bounds}

For a model with latent variables $u$ and a variational family $q$, the evidence lower
bound and its importance-weighted refinement
\citep{burda2016iwae,domke2018iwvi} are
\begin{equation}
\mathcal{L}_K(q) \;=\; \mathbb{E}\!\left[\log \frac{1}{K}\sum_{k=1}^{K}
\frac{p(u_k, x)}{q(u_k)}\right],
\qquad u_1,\dots,u_K \stackrel{\text{iid}}{\sim} q ,
\end{equation}
with $\mathcal{L}_1$ the usual ELBO, $\mathcal{L}_K$ non-decreasing in $K$, and
$\mathcal{L}_K \to \log p(x)$ as $K\to\infty$ under integrability conditions. The rate of
that convergence is governed by the tail behaviour of the importance weights
$p(u,x)/q(u)$; when the weights have infinite variance, convergence is slow and no
practical $K$ closes the gap. This will be exactly what happens in
Section~\ref{sec:eightschools}.

\section{Reparameterisation dependence}
\label{sec:inconsistency}

Let $m$ and $m'$ be two programs related by a diffeomorphism $T$ of the latent space,
so that for all $u$
\begin{equation}
\log p_{m}\big(T(u), x\big) + \log\left|\det \nabla T(u)\right|
\;=\; \log p_{m'}(u, x).
\label{eq:cov}
\end{equation}
Then $m$ and $m'$ define the same measure on observables and
$\log p(x\mid m) = \log p(x\mid m')$ exactly. LLB, however, weights them by
$\exp \mathcal{L}_K^{m}$ and $\exp \mathcal{L}_K^{m'}$, and for a fixed variational
family---mean-field Gaussian in the unconstrained space, as in standard automatic
variational inference \citep{kucukelbir2017advi}---these are different, because
mean-field factorisation is a coordinate-dependent restriction. The ratio of weights
assigned to one and the same model is therefore
\begin{equation}
\frac{w^{m'}}{w^{m}} \;=\; \exp\!\left(\mathcal{L}_K^{m'} - \mathcal{L}_K^{m}\right)
\;\neq\; 1 .
\end{equation}

\subsection{Case study: eight schools}
\label{sec:eightschools}

We use the eight-schools model \citep{rubin1981,bda3}: $\mu \sim \mathcal{N}(0,5^2)$,
$\tau \sim \text{Half-Cauchy}(5)$, $\theta_j \sim \mathcal{N}(\mu, \tau^2)$,
$y_j \sim \mathcal{N}(\theta_j, \sigma_j^2)$ for $j=1,\dots,8$, with the standard data.
Its two canonical programs are the \emph{centered} form above and the \emph{non-centered}
form in which $\tilde\theta_j \sim \mathcal{N}(0,1)$ and
$\theta_j = \mu + \tau\tilde\theta_j$. They are related by $T$ with
$\log|\det\nabla T| = J\log\tau$.

\paragraph{Control: the two programs really are the same model.}
Rather than compare Monte Carlo estimates of $\log p(x\mid m)$---which would confound the
question with the quality of the estimator, precisely the quantity under
investigation---we check \eqref{eq:cov} pointwise. Over $2\times10^5$ points the maximum
absolute violation is $5.68\times 10^{-14}$, at the level of double-precision rounding.
The two programs define the same measure, so their log marginal likelihoods are identical
by construction and any difference in their weights is an artefact.

\paragraph{The bound gap.}
We first fit mean-field Gaussian variational approximations in the unconstrained space for
each program (Adam, $15$k steps, $3$ restarts, $3$ seeds) and evaluate $\mathcal{L}_K$;
Section~\ref{sec:aspublished} repeats the measurement under the moment-matched
full-covariance proposal that LLB actually uses.
Table~\ref{tab:bounds} and Figure~\ref{fig:bounds} report the result.

\begin{table}[t]
\caption{Evidence bounds for two programs encoding the \emph{same} model. The true
$\log p(x)$ is common to both; the non-centered bound has essentially converged by $K=500$, while the centered bound is still $0.81$ nats short of the true value at $K=2000$. LLB weights are
$\exp$ of these numbers.}
\label{tab:bounds}
\begin{center}
\begin{tabular}{lrrrrrr}
\toprule
$K$ & $1$ & $5$ & $20$ & $100$ & $500$ & $2000$ \\
\midrule
centered $\mathcal{L}_K$      & $-33.411$ & $-32.849$ & $-32.586$ & $-32.372$ & $-32.218$ & $-32.116$ \\
non-centered $\mathcal{L}_K$  & $-31.605$ & $-31.426$ & $-31.369$ & $-31.342$ & $-31.330$ & $-31.326$ \\
\midrule
gap (nats)                    & $1.806$ & $1.422$ & $1.217$ & $1.030$ & $0.888$ & $0.790$ \\
weight ratio                  & $6.08\times$ & $4.15\times$ & $3.38\times$ & $2.80\times$ & $2.43\times$ & $2.20\times$ \\
\bottomrule
\end{tabular}
\end{center}
\end{table}

\begin{figure}[t]
\centering
\begin{tikzpicture}
\begin{semilogxaxis}[width=0.47\textwidth,height=4.6cm,xlabel={$K$ (importance samples)},
  ylabel={$\mathcal{L}_K$ (nats)},xmin=0.55,xmax=4000,ymin=-33.8,ymax=-30.88,
  legend style={at={(0.97,0.03)},anchor=south east,font=\scriptsize,draw=none,fill=none},
  label style={font=\small},tick label style={font=\scriptsize},
  xtick={1,10,100,1000},
  ytick={-33.5,-33,-32.5,-32,-31.5,-31},grid=major,grid style={gray!20}]
\addplot[dashed,black,thick,forget plot] coordinates {(0.55,-31.3113) (4000,-31.3113)};
\node[anchor=south west,font=\scriptsize,fill=white,inner sep=1pt] at (axis cs:0.62,-31.29) {true $\log p(x)$};
\addplot[mark=*,mark size=1.4pt,blue!70!black,thick] coordinates
  {(1,-33.411) (5,-32.849) (20,-32.586) (100,-32.372) (500,-32.218) (2000,-32.116)};
\addlegendentry{centered}
\addplot[mark=square*,mark size=1.4pt,red!75!black,thick] coordinates
  {(1,-31.605) (5,-31.426) (20,-31.369) (100,-31.342) (500,-31.330) (2000,-31.326)};
\addlegendentry{non-centered}
\draw[<->,gray!55!black,thick] (axis cs:2000,-32.116) -- (axis cs:2000,-31.3113);
\node[anchor=east,font=\scriptsize,gray!25!black] at (axis cs:1750,-31.72)
  {$0.81$ nats};
\end{semilogxaxis}
\end{tikzpicture}
\hfill
\begin{tikzpicture}
\begin{loglogaxis}[width=0.47\textwidth,height=4.6cm,xlabel={$K$ (importance samples)},
  ylabel={gap (nats)},xmin=0.7,xmax=3000,ymin=3e-4,ymax=4,
  legend style={at={(0.02,0.03)},anchor=south west,font=\scriptsize,draw=none,fill=none},
  label style={font=\small},tick label style={font=\scriptsize},
  grid=major,grid style={gray!20}]
\addplot[mark=*,mark size=1.4pt,black,thick] coordinates
  {(1,1.806) (5,1.422) (20,1.217) (100,1.030) (500,0.888) (2000,0.790)};
\addlegendentry{measured gap}
\addplot[dotted,gray!60!black,thick,no markers] coordinates {(1,1.806) (2000,0.000903)};
\addlegendentry{$O(1/K)$ reference}
\end{loglogaxis}
\end{tikzpicture}
\caption{Two programs encoding the \emph{same} model, so the true $\log p(x)$ (dashed, by
quadrature) is common to both. \textbf{Left:} the non-centered bound converges by
$K\!=\!500$; the centered one is still $0.81$ nats short at $K\!=\!2000$, and the interval
between them is the reversal window. \textbf{Right:} the gap closes far more slowly than
$O(1/K)$, because the centered writing makes the weights heavy-tailed.}
\label{fig:bounds}
\end{figure}

\paragraph{Why importance weighting does not rescue it.}
One might hope that the gap is an artefact of the crude $K=1$ bound and that LLB's use of
importance-weighted inference removes it. It does not. Increasing $K$ by a factor of
$2000$ reduces the gap by only $56\%$. The reason is visible in the \emph{relative} effective sample size of the importance weights under the fitted proposal, i.e.\ the ESS as a fraction of the number of draws: $1.7\times10^{-5}$ to $1.2\times10^{-4}$ for the centered program, against $3.1\times10^{-2}$ to $7.1\times10^{-2}$ for the non-centered one. The centered parameterisation produces heavy-tailed weights---the funnel geometry \citep{neal2003slice,betancourt2015hier}---so $\mathcal{L}_K$ converges far more slowly than the $O(1/K)$ rate available under finite variance. Fitting a generalised Pareto distribution to the weight tail \citep{vehtari2024pareto} puts the shape parameter at $\hat k = 0.90$ for the centered writing against $0.64$ for the non-centered one, and Section~\ref{sec:breadth} reports a nominal $\hat k$ above $1$ in higher dimensions, a regime indicating extremely unstable importance sampling. Over the range we could evaluate, up to $K=2000$, the same model stays weighted $2$ to
$6$ times differently depending on how it was written. We do not claim that no $K$ would
close the gap, only that the observed closure is far slower than $O(1/K)$ and leaves a
substantial gap throughout that range.

\subsection{The published procedure, not just mean-field}
\label{sec:aspublished}

The fits above use mean-field Gaussian variational inference, which is the default in
automatic variational inference \citep{kucukelbir2017advi} but is not what
\citet{domke2025llb} actually runs. LLB draws posterior samples with a Hamiltonian sampler,
matches a Gaussian to their mean \emph{and covariance}, and uses that Gaussian as the
proposal for an importance-weighted bound. A full-covariance proposal fitted to the
posterior's own moments is a strictly richer approximation than a mean-field one fitted by
reverse KL, so the question is whether the discrepancy survives it.

It does. For eight schools we can construct the moment-matched proposal without a sampler
at all: conditional on $\tau$ the pair $(\mu,\theta)$ is exactly Gaussian and $\tau$ is
one-dimensional, so the posterior mean and full covariance in either unconstrained
parameterisation follow from the same quadrature used in Section~\ref{sec:reversal}. This
is more favourable to LLB than its own implementation, since the matched moments carry no
Monte Carlo error. Table~\ref{tab:aspub} reports the resulting bounds.

\begin{table}[t]
\caption{LLB as published: bounds under a full-covariance Gaussian fitted to the posterior
moments of each writing, rather than by mean-field reverse KL. The discrepancy shrinks but
does not close, and grows with dimension. Three seeds; standard errors below $0.01$ nats.
The $J=64$ problem uses synthetic data.}
\label{tab:aspub}
\begin{center}
\setlength{\tabcolsep}{5.5pt}
\begin{tabular}{lrrrrrr}
\toprule
& \multicolumn{3}{c}{eight schools ($J=8$)} & \multicolumn{3}{c}{synthetic ($J=64$)} \\
\cmidrule(lr){2-4}\cmidrule(lr){5-7}
$K$ & centered & non-cent. & ratio & centered & non-cent. & ratio \\
\midrule
$20$   & $-31.983$ & $-31.325$ & $1.93\times$ & $-242.130$ & $-239.945$ & $8.89\times$ \\
$100$  & $-31.751$ & $-31.316$ & $1.54\times$ & $-241.430$ & $-239.937$ & $4.45\times$ \\
$500$  & $-31.633$ & $-31.314$ & $1.38\times$ & $-241.224$ & $-239.934$ & $3.63\times$ \\
$2000$ & $-31.569$ & $-31.313$ & $1.29\times$ & $-241.092$ & $-239.932$ & $3.19\times$ \\
\bottomrule
\end{tabular}
\end{center}
\end{table}

Two things are worth drawing out. First, the effect is smaller than under mean-field
inference, as one would expect from a better proposal, but it is still a factor of $1.3$ to
$1.9$ on eight schools and $3.2$ to $8.9$ at $J=64$, and it still grows with dimension.
Against the quadrature evidence of $-31.3113$, the centered writing under the published
procedure leaves a reversal window of $0.26$ nats at $K=2000$ and $0.67$ nats at $K=20$,
which by Theorem~\ref{thm:tv} permits a model-posterior error of $6.4\%$ and $16.6\%$
respectively. Second, the residual gap has the same source: a full-covariance Gaussian can
absorb the linear correlation between $\mu$, $\log\tau$ and the school effects, but not the
funnel's dependence of the effects' scale on $\tau$, which is a nonlinear feature of the
centered geometry and is absent from the non-centered one. Improving the variational family
therefore reduces the discrepancy without removing it, because no family that is not closed
under the reparameterisation can be invariant to it.

\subsection{Beyond eight schools}
\label{sec:breadth}

The case study fixes one model and one dataset. To see whether the gap is a peculiarity of
eight schools we swept likelihood families, dimension, hierarchy depth, and the severity of
the funnel geometry, fitting both writings of each model with the same procedure and
reporting, alongside the gap, the Pareto shape parameter $\hat k$ of the importance weights
under the fitted proposal \citep{vehtari2024pareto}. Table~\ref{tab:breadth} collects the
results, and three patterns matter.

\emph{The gap grows with dimension.} From $J=8$ to $J=64$ the Gaussian hierarchy's gap rises
from $2.03$ to $3.46$ nats, a weight ratio of $31.9\times$. The $6.1\times$ we reported for
eight schools is therefore a conservative figure rather than a worst case.

\emph{The sign is not fixed.} For the hierarchical logistic and Poisson models the centered
writing gives the \emph{tighter} bound, by $0.73$ and $0.88$ nats; the same reversal occurs
for the Gaussian hierarchy when the observations are informative about $\tau$ ($\sigma=4$,
gap $-1.37$). This is the classical picture---which parameterisation suits approximate
inference depends on how much the data say about the group-level scale
\citep{papaspiliopoulos2007}---and it sharpens rather than softens the problem for LLB. The
weight discrepancy is not a bias in a known direction that one could correct by always
rewriting models in one form. It is a discrepancy whose direction depends on the model and
the data, while the parameterisation a given language model writes is a fixed habit
(Section~\ref{sec:frequency}). A single language model therefore induces errors of differing
sign across problems, which is harder to notice than a consistent bias, and it is precisely
why Algorithm~\ref{alg:repair} takes a maximum over verified writings instead of preferring
one of them: which is tighter is not known in advance.

\emph{Heavy tails track the gap.} In every configuration the looser side has the larger
$\hat k$, with values in the range where importance-sampling estimates are unreliable
($\hat k > 0.7$), reaching $1.02$ for the centered writing at $J=64$. This describes the
\emph{estimated} tail: the exact ratio $w(u) = p(u,x)/q(u)$ has $\mathbb{E}_q[w] = p(x) <
\infty$ by construction, so a fitted $\hat k$ above $1$ places the weights in a regime where
importance sampling and the diagnostic itself are unreliable, not where the true first
moment fails to exist. An independent Hill tail-index estimate agrees closely ($0.92$
against $\hat k = 0.90$ for eight schools, $0.99$ against $1.02$ at $J=64$), and both
estimators were validated on synthetic generalised-Pareto samples of known shape
(Appendix~\ref{app:details}). This is the mechanism behind the slow convergence in
Table~\ref{tab:bounds}: the bound closes at the rate the tails allow, not at $O(1/K)$.

\begin{table}[t]
\caption{The bound gap across model families, dimension, funnel severity and hierarchy
depth, with the Pareto shape $\hat k$ of the importance weights. Gap is
$\mathcal L_1^{\text{non-cent}} - \mathcal L_1^{\text{cent}}$, so a negative gap means the
\emph{centered} writing is tighter. Representative rows; the full sweep of twelve
configurations is Table~\ref{tab:breadthfull}.}
\label{tab:breadth}
\begin{center}
\setlength{\tabcolsep}{5pt}
\begin{tabular}{lrrrr}
\toprule
configuration & gap (nats) & weight ratio & $\hat k$ cent. & $\hat k$ non-cent. \\
\midrule
Gaussian (eight schools), $J=8$ & $1.806$  & $6.09\times$  & $0.90$ & $0.64$ \\
hierarchical logistic, $J=8$    & $-0.734$ & $0.48\times$  & $0.74$ & $0.83$ \\
hierarchical Poisson, $J=8$     & $-0.880$ & $0.41\times$  & $0.79$ & $0.91$ \\
\midrule
Gaussian (synthetic), $J=8$     & $2.026$  & $7.58\times$  & $0.91$ & $0.72$ \\
Gaussian (synthetic), $J=64$    & $3.464$  & $\mathbf{31.93\times}$ & $\mathbf{1.02}$ & $0.64$ \\
\midrule
Gaussian, $\sigma=4$            & $-1.367$ & $0.25\times$  & $0.56$ & $0.89$ \\
Gaussian, $\sigma=32$           & $2.219$  & $9.20\times$  & $0.91$ & $0.78$ \\
\midrule
three-level hierarchy           & $1.407$  & $4.08\times$  & $0.94$ & $0.78$ \\
\bottomrule
\end{tabular}
\end{center}
\end{table}

\subsection{The reversal window}
\label{sec:reversal}

A discrepancy in weights matters only if it changes conclusions. It does, and the mechanism is easy to state. For this model the true evidence is available independently of any variational bound: given $\tau$, both $\theta$ and $\mu$ integrate out in closed form, leaving a one-dimensional integral which the half-Cauchy CDF maps to $(0,1)$; midpoint and Gauss-Legendre quadrature agree to six decimals at every resolution we tried, giving $\log p(x \mid A) = -31.3113$ (Appendix~\ref{app:details}). The non-centered bound at $K=2000$, $-31.326$, is thus tight to $0.015$ nats but not exact, and we use the quadrature value wherever the true evidence is meant. Now suppose the language model writes hypothesis $A$ in the centered form, so LLB scores it at $\mathcal{L}_K^{A,\text{cent}}$ rather than at its true $\log p(x\mid A)$. Any competing hypothesis $B$ whose true evidence lies in
\begin{equation}
\Big(\mathcal{L}_K^{A,\text{cent}},\; \log p(x\mid A)\Big)
\end{equation}
is genuinely \emph{less} supported than $A$ but is scored \emph{above} it. At $K=2000$ this window is $(-32.116,\,-31.311)$, a width of $0.805$ nats, inverting Bayes factors up to $2.24\times$; at $K=1$ it is $2.100$ nats wide, inverting factors up to $8.16\times$. It is worth noting that the reversal does not actually depend on knowing $\log p(x\mid A)$ exactly. Since $\mathcal{L}_K^{A,\text{non-cent}} = -31.326$ is itself a lower bound, $\log p(x \mid A) \ge -31.326 > -31.456$ already places $A$ above every competitor in Table~\ref{tab:reversal}, so the mis-ranking follows from a bound alone; the quadrature value only sharpens the width of the window.

The window is not a hypothetical. Table~\ref{tab:reversal} evaluates a one-parameter
family of natural competitors---independent effects, $\theta_j\sim\mathcal{N}(0,\tau^2)$
with $\tau$ fixed, so that $y_j \sim \mathcal{N}(0, \tau^2+\sigma_j^2)$ and the evidence
is analytic---together with the complete-pooling model. Eight of the nine competitors
fall inside the window. Note that these competitors are low-dimensional or conjugate, so
their own bounds are tight; they suffer no parameterisation penalty. That asymmetry is
the realistic case rather than a contrived one: simple models admit tight bounds and
hierarchical models do not, so the bias runs systematically against hierarchical
hypotheses.

\begin{figure}[t]
\centering
\begin{tikzpicture}
\begin{axis}[width=0.475\textwidth,height=5.0cm,
  xlabel={$\log p(x\mid B)$ of the competing model},
  xmin=-32.95,xmax=-30.6,
  symbolic y coords={pooling,{$\tau{=}1$},{$\tau{=}2$},{$\tau{=}3$},{$\tau{=}4$},
                     {$\tau{=}5$},{$\tau{=}6$},{$\tau{=}8$},{$\tau{=}10$},{$\tau{=}15$}},
  ytick=data, y dir=reverse, enlarge y limits=0.07,
  yticklabel style={font=\tiny}, xticklabel style={font=\scriptsize},
  label style={font=\small}, grid=major, grid style={gray!15}]
\addplot[draw=none,forget plot,fill=red!12] coordinates
  {(-32.116,pooling)} \closedcycle;
\draw[fill=red!10,draw=none] (axis cs:-32.116,pooling) rectangle (axis cs:-31.3113,{$\tau{=}15$});
\addplot[only marks,mark=*,mark size=1.7pt,red!70!black] coordinates
  {(-31.456,{$\tau{=}1$}) (-31.459,{$\tau{=}2$}) (-31.468,{$\tau{=}3$}) (-31.487,{$\tau{=}4$})
   (-31.521,{$\tau{=}5$}) (-31.572,{$\tau{=}6$}) (-31.736,{$\tau{=}8$}) (-31.976,{$\tau{=}10$})};
\addplot[only marks,mark=o,mark size=1.7pt,gray!50!black] coordinates
  {(-30.844,pooling) (-32.823,{$\tau{=}15$})};
\draw[thick,black,dashed] (axis cs:-31.3113,pooling) -- (axis cs:-31.3113,{$\tau{=}15$});
\draw[thick,blue!70!black] (axis cs:-32.116,pooling) -- (axis cs:-32.116,{$\tau{=}15$});
\node[font=\tiny,anchor=south,rotate=90,blue!55!black] at (axis cs:-32.16,{$\tau{=}6$})
  {LLB scores $A$ here};
\node[font=\tiny,anchor=north,rotate=90,black] at (axis cs:-31.28,{$\tau{=}6$})
  {true $\log p(x\mid A)$};
\end{axis}
\end{tikzpicture}
\hfill
\begin{tikzpicture}
\begin{axis}[width=0.475\textwidth,height=5.0cm,
  xlabel={$\Delta$ (spread of bound gaps, nats)}, ylabel={$\mathrm{TV}(\hat w, w)$},
  xmin=0,xmax=3,ymin=0,ymax=0.75,
  xticklabel style={font=\scriptsize}, yticklabel style={font=\scriptsize},
  label style={font=\small}, grid=major, grid style={gray!15},
  legend style={at={(0.97,0.03)},anchor=south east,font=\scriptsize,draw=none,fill=none}]
\addplot[only marks,mark=*,mark size=0.5pt,gray!55] coordinates {(0.749,0.0000) (0.914,0.1243) (1.440,0.0347) (2.469,0.1853) (1.858,0.2071) (1.840,0.0003) (2.556,0.0747) (1.593,0.1111) (1.197,0.0002) (1.468,0.1077) (1.875,0.0757) (0.632,0.0000) (2.021,0.0360) (0.832,0.0859) (0.053,0.0024) (1.664,0.2706) (0.390,0.0488) (1.331,0.1200) (2.711,0.0015) (2.227,0.0185) (2.564,0.1167) (0.669,0.0151) (1.848,0.0035) (0.113,0.0026) (0.677,0.0150) (1.228,0.0281) (0.185,0.0019) (0.135,0.0000) (1.065,0.0013) (0.256,0.0000) (2.490,0.0673) (2.839,0.0140) (1.903,0.0737) (2.051,0.0350) (2.237,0.0971) (1.801,0.2632) (0.305,0.0018) (0.861,0.0670) (0.826,0.1111) (1.599,0.1051) (2.132,0.1374) (1.236,0.0057) (1.128,0.0862) (0.167,0.0110) (2.474,0.0692) (2.633,0.0201) (0.237,0.0215) (2.563,0.0725) (1.294,0.0261) (0.268,0.0042) (0.450,0.0550) (0.490,0.0027) (0.803,0.1048) (1.965,0.1264) (0.148,0.0012) (0.675,0.0108) (0.439,0.0219) (1.774,0.1541) (0.994,0.0559) (0.281,0.0002) (1.273,0.0474) (2.773,0.0050) (1.711,0.0451) (1.303,0.0645) (1.729,0.1195) (2.629,0.0306) (2.851,0.1058) (0.365,0.0078) (1.050,0.0455) (2.573,0.3101) (0.708,0.1066) (2.807,0.4536) (1.852,0.1076) (0.786,0.0873) (2.773,0.2662) (0.415,0.0011) (2.485,0.1087) (0.446,0.0006) (1.151,0.0944) (0.986,0.0435) (0.978,0.0577) (2.803,0.1905) (1.105,0.0612) (1.633,0.1373) (1.787,0.1125) (0.558,0.0091) (1.946,0.1981) (2.827,0.1891) (0.627,0.0511) (2.330,0.1918) (1.586,0.0509) (0.300,0.0226) (0.638,0.0944) (0.807,0.0202) (0.421,0.0385) (1.750,0.0336) (1.270,0.1774) (0.057,0.0008) (2.951,0.0623) (0.886,0.0827) (1.993,0.2927) (0.664,0.0576) (1.317,0.1082) (2.382,0.1668) (1.713,0.0429) (0.792,0.1151) (2.665,0.0198) (0.531,0.0827) (2.292,0.0199) (2.526,0.2916) (2.757,0.4389) (2.110,0.0738) (1.221,0.0339) (2.155,0.0073) (1.246,0.1265) (2.076,0.1680) (1.806,0.2749) (1.004,0.0833) (0.458,0.0226) (2.876,0.3120) (0.936,0.0534) (0.656,0.0231) (2.643,0.3908) (0.104,0.0005) (0.437,0.0407) (0.322,0.0494) (0.127,0.0152) (2.876,0.0031) (0.609,0.0288) (2.570,0.1280) (1.740,0.1300) (2.395,0.0061) (0.692,0.0481) (1.213,0.1490) (0.150,0.0053) (0.262,0.0342) (2.895,0.1655) (1.522,0.0001) (2.949,0.0050) (2.479,0.2460) (1.206,0.0091) (0.864,0.0006) (1.453,0.1657) (1.972,0.0079) (1.888,0.0008) (1.628,0.3389) (2.394,0.2141) (1.314,0.0685) (2.919,0.1225) (2.378,0.3020)};
\addlegendentry{random configurations}
\addplot[domain=0:3,samples=120,thick,red!75!black,no markers] {tanh(x/4)};
\addlegendentry{$\tanh(\Delta/4)$ (Thm.~\ref{thm:tv})}
\addplot[only marks,mark=square*,mark size=2pt,blue!70!black] coordinates
  {(0.805,0.1986) (2.100,0.4816)};
\addlegendentry{our measured gaps}
\end{axis}
\end{tikzpicture}
\caption{\textbf{Left:} the reversal window. LLB scores model $A$ at its centered bound
(solid) rather than its true evidence (dashed); the eight filled competitors lie between the
two and rank above $A$ despite being worse. \textbf{Right:} model-posterior error over
random configurations never exceeds $\tanh(\Delta/4)$ (Theorem~\ref{thm:tv}), which is
attained; squares mark our measured widths.}
\label{fig:window}
\end{figure}

\paragraph{What the gap costs in general.} The same arithmetic bounds the whole model
posterior. With $\delta_g$ the shortfall of model $M_g$'s bound and $\Delta$ their spread
across candidates, the error in a log Bayes factor is exactly $|\delta_g - \delta_h|$, and
the total-variation error of the model posterior, hence of any downstream target, is at most
$\tanh(\Delta/4)$, attained; at the gaps measured here, $19.9\%$ to $48.2\%$. Only the
spread matters, a common shortfall cancelling in the normalisation, so tightening one
model's bound need not improve the ensemble: its effect on $\Delta$ has to be checked after
repair. Appendix~\ref{app:cost} states and proves this.

\section{How often does this fire, and which way?}
\label{sec:frequency}

The analysis so far shows what happens \emph{if} a hierarchical model is written centered.
Whether that is a curiosity or a systematic defect depends on how language models actually
write these programs, so we measured it.

\paragraph{Protocol.} We took three hierarchical modelling problems stated in natural
language in the style of LLB's input---eight schools, county-level radon, and player-level
batting averages---and asked six language models from three providers
(\texttt{gpt-4o-mini}, \texttt{gpt-4.1}, \texttt{gpt-5.1}, \texttt{gpt-5.5};
\texttt{deepseek-v4-pro}; \texttt{claude-opus-5}) to write a NumPyro
\citep{phan2019numpyro} model for each, sampling $20$ independent completions per model per
problem, for $360$ programs. Each program was classified \emph{semantically} rather than by
keyword: non-centered if it draws a variable from $\mathcal{N}(0,1)$ and then uses it as
$\text{loc} + \text{scale}\times\text{var}$, or applies an explicit reparameterisation
handler; centered if it draws a variable from a distribution whose location and scale are
themselves random. Programs with no hierarchy, and hierarchical programs matching neither
pattern, are reported separately and excluded from the ratio. Appendix~\ref{app:llm} gives the prompts and the classifier.

Two limits on what this measures. The programs were classified but not each run through a
complete LLB pipeline, so what we report is how often the penalised writing is produced, not
how often a finished LLB run is materially wrong; the size of the error, given a writing, is
what Sections~\ref{sec:eightschools}--\ref{sec:breadth} measure on models we control. And
this is a mechanism experiment rather than a reproduction of LLB's own setup, which uses
Stan with worked examples in the prompt, $K=25$ and $10^4$ outer replicates, where we use
NumPyro with a bare instruction.

\paragraph{Result.} Table~\ref{tab:llm} reports the counts, plotted in Figure~\ref{fig:vendors}. Three things stand out.

\emph{The choice varies from never to always across models.} The centered share ranges from
$0\%$ (\texttt{claude-opus-5}, $0$ of $57$ classified programs) to $100\%$
(\texttt{gpt-4o-mini}, $47$ of $47$). This is not a small perturbation of a common default;
different models sit at opposite ends of the range.

\emph{Within a model the choice is highly consistent.} Each model behaves nearly the same
across all three problems---\texttt{claude-opus-5} writes non-centered for eight schools,
radon and batting alike, and \texttt{gpt-4o-mini} writes centered for all three. The
parameterisation is a stable habit of the model, not a response to the problem.

\emph{It is weakening with model version, but has not gone away.} Within the OpenAI series
the centered share falls monotonically with recency: $100\%$, $98.3\%$, $94.5\%$, $50.9\%$
for \texttt{gpt-4o-mini}, \texttt{gpt-4.1}, \texttt{gpt-5.1}, \texttt{gpt-5.5}. One might
hope the problem will therefore solve itself. It will not, for two reasons. First, the most
recent model we tested still writes the penalised form about half the time, so half of all
hierarchical hypotheses are still down-weighted. Second, a model that writes centered half the time does not make the problem benign, it changes its form: the weight assigned to a hypothesis then depends on which sample the LLM happened to produce, so repeated runs of LLB with the same model, data and hypotheses give different model posteriors. Whether that is worse than a consistent tilt depends on the use; it is certainly harder to detect by inspection.

\begin{table}[t]
\caption{Parameterisation of $360$ LLM-generated probabilistic programs. Which writing carries the looser bound depends on the model and data (Section~\ref{sec:breadth}); centered is the penalised one in the Gaussian-like, weakly identified regime where the gap is largest. ``other'' counts hierarchical programs
matching neither pattern and ``no hier.'' programs with no hierarchy; both are excluded
from the share.}
\label{tab:llm}
\begin{center}
\setlength{\tabcolsep}{4.5pt}
\begin{tabular}{llrrrrr}
\toprule
model & provider & centered & non-centered & other & no hier. & centered share \\
\midrule
\texttt{gpt-4o-mini}    & OpenAI    & $47$ & $0$  & $9$ & $4$ & $100.0\%$ \\
\texttt{gpt-4.1}        & OpenAI    & $58$ & $1$  & $1$ & $0$ & $98.3\%$ \\
\texttt{gpt-5.1}        & OpenAI    & $52$ & $3$  & $0$ & $5$ & $94.5\%$ \\
\texttt{gpt-5.5}        & OpenAI    & $27$ & $26$ & $0$ & $7$ & $\mathbf{50.9\%}$ \\
\texttt{deepseek-v4-pro}& DeepSeek  & $50$ & $7$  & $1$ & $2$ & $87.7\%$ \\
\texttt{claude-opus-5}  & Anthropic & $0$  & $57$ & $0$ & $3$ & $\mathbf{0.0\%}$ \\
\midrule
\textbf{all}            &           & $234$ & $94$ & $11$ & $21$ & $71.3\%$ \\
\bottomrule
\end{tabular}
\end{center}
\end{table}

\section{Repair}
\label{sec:repair}

\subsection{Detection is unreliable at practical budgets}
\label{sec:detection}

The natural fix is to identify which generated programs encode the same model. Since
equality of $p(x\mid m)$ is what matters and reparameterisations induce the same prior
predictive law, a two-sample test on prior predictive samples is the natural instrument: it
needs only forward sampling and never inference, whose quality is the source of the problem.
It fails where it matters. Table~\ref{tab:mmd} reports kernel MMD rejection rates
\citep{gretton2012mmd} under a permutation null: true reparameterisations are never
distinguished and grossly different models always are, but a Normal and a Student-$t(30)$
likelihood stay at $0.00$--$0.10$ at every dimension and sample size we tried. No threshold
separates them, and the errors are asymmetric. Failing to detect an equivalence changes
nothing, whereas merging two different models destroys a real Bayes factor, and Normal
versus heavy-tailed is a question modellers actually ask.

This concerns realistic budgets rather than impossibility: a characteristic-kernel MMD is
consistent against any fixed pair of distinct distributions. But the difficulty should be
expected to persist, because the test asks whether two \emph{distributions} agree while
LLB's weight depends on the density at the single observed $x_{\text{obs}}$. Two models can
agree in the aggregate, as here where both are dominated by the heavy-tailed $\tau$ prior,
while differing where it counts; and comparing densities at $x_{\text{obs}}$ directly means
computing the evidence, the corrupted quantity we started from.

\subsection{Verification, not detection}
\label{sec:verification}

Deciding whether two arbitrary programs are equivalent is undecidable in general and the
statistical relaxation fails as above, so we avoid the question. Rather than asking whether
two \emph{given} programs match, we generate reparameterisations ourselves: templates
produce candidate rewritings together with their transformations and log-Jacobians, each is
checked numerically against \eqref{eq:cov} and discarded if it fails, and the model is
scored by the tightest verified candidate (Algorithm~\ref{alg:repair},
Appendix~\ref{app:repair}).

Two guarantees follow, of different kinds. False merges are structurally impossible, since a
candidate is only ever compared against the program that generated it. Validity of the bound
rests on the templates rather than on the probe check: a maximum of valid lower bounds on a
common quantity is a lower bound (Proposition~\ref{prop:max}), and the templates are
elementary enough to verify symbolically once, offline. The probe check guards against
misapplication, catching a violation of probe-measure $\delta$ with probability
$1-(1-\delta)^S$ (Proposition~\ref{prop:detect}); it does not establish the identity
everywhere and we do not claim it does. Empirically it separates correct from corrupted transformations by fourteen orders of magnitude, and the repair lifts the centered eight-schools bound from $-32.116$ to $-31.326$, shrinking the reversal window from $0.805$ nats to $0.015$ and restoring all eight reversed competitors.

Two qualifications belong with this. Invariance is \emph{conditional}: two programs that are
reparameterisations of one another receive the same score only when the template library maps
both into a common verified candidate set. For the location-scale pair that holds by
construction, since each writing generates the other, but it is a property of the library
rather than of the procedure, and it fails for any equivalent pair the library does not reach.
And Proposition~\ref{prop:max} concerns the exact bounds $\mathcal{L}_K$, whereas the
algorithm compares Monte Carlo estimates of them; a maximum over noisy estimates does not
inherit the lower-bound property exactly and carries a selection bias of the order of the
estimation noise. At the sample sizes we use that noise is two orders of magnitude below the
gaps at issue, but a deployment should match the estimation cost across candidates and take
the maximum with a margin.

\section{Conclusion}

Large Language Bayes weights candidate models by an exponentiated evidence bound. The
quantity approximated is invariant to how a model is written; the approximation is not. The
same model receives weights differing by $2.2$ to $6.1\times$ on eight schools and up to
$31.9\times$ in higher dimensions, in a direction that depends on the model and the data.
This inverts Bayes factors against eight natural competitors, and the induced
model-posterior error is governed by the spread of the bound shortfalls across candidates.
Which writing a language model produces ranges from never to always across the six we
tested and is stable within each, so the weight on a hypothesis is partly a fact about which
model expressed it. Statistical detection cannot reliably repair this at practical budgets;
verifying reparameterisations we generate ourselves closes the window.

\clearpage
\subsection*{Reproducibility statement}

Every number in this paper is produced by code the authors can rerun. The variational fits
are deterministic given the reported seeds; we report results over multiple seeds wherever a
fit is involved. The ground-truth marginal likelihood of Section~\ref{sec:reversal} involves
no randomness at all: it is a closed-form marginalisation followed by one-dimensional
quadrature, and we report agreement between two quadrature rules at five resolutions. The
competitor evidences in Table~\ref{tab:reversal} are closed-form. Appendix~\ref{app:proofs}
gives the proofs together with the numerical checks of each statement, and
Appendix~\ref{app:details} specifies the model and data, the variational family and
optimiser settings, the bound evaluation, the two-sample-test protocol, the verifier
tolerances and the corrupted transformations used to probe it, the breadth sweep, the tail
diagnostics and their validation on synthetic samples of known shape, and the exact
generation prompt and classification rules. The compute is modest and is stated there: CPU
throughout, with a single GPU used only for the equivalence-detection sweep, and no model
training anywhere. The language-model generations were obtained through the providers' public APIs with the model identifiers listed in Section~\ref{sec:frequency}; completions were classified as returned, with no filtering or regeneration. Code for every experiment, together with the generated programs and the raw results, is included in the supplementary material.

\bibliography{references}
\bibliographystyle{iclr2027_conference}

\appendix

\section{Limitations}
\label{app:limitations}

\paragraph{What the claim is not.} It is worth separating two complaints one might raise. If the language model tends to
emit a particular hypothesis in many syntactic guises, that hypothesis accumulates
proposal mass; but since the target \eqref{eq:llb} contains the same $p(m\mid t)$, the
importance sampler is still correct, and the objection reduces to a disagreement about
whether $p(m\mid t)$ is a desirable prior over models. Our concern is different and does
not depend on that debate: replacing $p(x\mid m)$ by $\exp L$ breaks an invariance that
$p(x\mid m)$ possesses, so \emph{the same model in two coordinate systems} receives two
different weights. No choice of prior can excuse that.

\emph{What the verifier establishes.} Proposition~\ref{prop:max} requires the
change-of-variables identity to hold everywhere, while Algorithm~\ref{alg:repair} checks it
at finitely many probe points. We rely on the templates themselves being verified
symbolically, and use the probe check only to catch misapplication and implementation
error, for which Proposition~\ref{prop:detect} gives a quantitative guarantee. A
transformation engineered to deviate from a correct one only on a set of negligible
probe-measure would pass, and its bound could then exceed $\log p(x\mid m)$; we do not claim
soundness against transformations of arbitrary provenance, and a symbolic checker operating
on the program text would be a stronger foundation than the one we use.

\emph{Breadth of the empirical study.} Section~\ref{sec:breadth} covers three likelihood
families, four dimensions, four levels of funnel severity, and one three-level hierarchy,
all with mean-field Gaussian variational inference in an unconstrained space. It does not
cover mixtures, state-space models, Gaussian-process latents, or discrete latent structure,
and the analytically exact parts of the argument (Sections~\ref{sec:eightschools} and
\ref{sec:reversal}) remain tied to the Gaussian case, where the evidence of the competitors
can be computed in closed form. We report the sign reversal we found rather than a single
direction of bias, but we have not characterised when the sign flips beyond the qualitative
account inherited from \citet{papaspiliopoulos2007}.

\emph{Reach of the repair.} As set out in Section~\ref{sec:repair}, the template library
covers location-scale hierarchies. Equivalent writings related by marginalisation versus
augmentation are outside its scope entirely, since they do not share a latent space and
\eqref{eq:cov} does not apply to them; simplex and ordering constraints, alternative
covariance decompositions and nonlinear latent transformations would each need their own
template and their own symbolic argument.

\emph{The frequency study.} It covers three problems and six language models across three
providers, but only one model from each of two providers, so we can speak about the six
models tested and not about providers as such. The monotone trend across the four OpenAI
models is an observation on four points and we claim no mechanism for it. Our classifier
left some programs unmatched and we excluded them rather than adjudicating them, so the
percentages show which parameterisation dominates for a given model rather than precise
rates.

\emph{Scope of the diagnosis.} We address the parameterisation dependence of the weights.
We do not address whether $p(m\mid t)$ is a desirable prior over models, nor the roughly
$6\%$ of generated programs we found to be malformed or non-hierarchical, which is a
separate issue with LLM-written models and one that \citet{xu2026calibration} study
directly. Finally, our variational family is mean-field Gaussian; richer families shrink
the gap but do not restore invariance, since any family not closed under the relevant
transformations remains coordinate-dependent.

\section{What the gap costs}
\label{app:cost}
\label{sec:theory}

Section~\ref{sec:reversal} showed that a bound gap can invert a particular comparison.
We now bound its effect on the whole model posterior, which is what LLB actually
reports. Let candidate models $M_1,\dots,M_G$ have true log evidences
$Z_g = \log p(x\mid M_g)$, and suppose LLB scores $M_g$ with a bound
$L_g = Z_g - \delta_g$, $\delta_g \ge 0$. Write $w_g \propto e^{Z_g}$ for the true model
posterior (under a uniform prior over the candidate set, absorbing $p(m\mid t)$ into
$Z$ otherwise), $\hat w_g \propto e^{L_g}$ for the one LLB computes, and
\begin{equation}
\Delta \;=\; \max_g \delta_g - \min_g \delta_g
\end{equation}
for the \emph{spread} of the bound gaps. Note that $\Delta$, not the size of any
individual $\delta_g$, is what matters: a gap common to all models cancels in the
normalisation.

\begin{proposition}[Bayes factors]
\label{prop:bf}
For any $g,h$,
$\;\log \frac{\hat w_g}{\hat w_h} - \log \frac{w_g}{w_h} = -(\delta_g - \delta_h)$.
The error in a log Bayes factor is therefore exactly the difference of the two bound
gaps, and is at most $\Delta$.
\end{proposition}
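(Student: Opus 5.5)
The identity is a direct computation, and I would carry it out by writing both posteriors with their normalising constants explicit. Let $C = \sum_{k} e^{Z_k}$ and $\hat C = \sum_k e^{L_k}$. Then $w_g = e^{Z_g}/C$ and $\hat w_g = e^{L_g}/\hat C$. The first step is to form ratios within each posterior. The normaliser cancels in every ratio, so $\log(w_g/w_h) = Z_g - Z_h$ and $\log(\hat w_g/\hat w_h) = L_g - L_h$.

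This cancellation is the only point that needs care. It is also the formal content of the remark that a common shortfall drops out. If a nonuniform prior $p(m\mid t)$ is present, the same cancellation holds once the prior is absorbed into $Z_g$ as the setup prescribes. The reason is that the prior multiplies both $e^{Z_g}$ and $e^{L_g}$ by the same factor, because $L_g$ is defined as $Z_g - \delta_g$ on the same scale.

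Next I substitute $L_g = Z_g - \delta_g$. This gives
\begin{equation*}
\log\frac{\hat w_g}{\hat w_h} - \log\frac{w_g}{w_h}
= (Z_g - \delta_g) - (Z_h - \delta_h) - (Z_g - Z_h) = -(\delta_g - \delta_h),
\end{equation*}
which is the stated identity. Taking absolute values, the log Bayes factor error is $|\delta_g - \delta_h|$.

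For the bound, both $\delta_g$ and $\delta_h$ lie in the interval $[\min_k \delta_k, \max_k \delta_k]$. The distance between two points of an interval is at most its length, so $|\delta_g - \delta_h| \le \Delta$. Equality holds when $g$ and $h$ attain the maximum and minimum gap respectively, so the bound is sharp.

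I do not expect a real obstacle here. The one thing to state explicitly is that $\delta_g \ge 0$ is never used in this proposition. It matters only for interpreting $L_g$ as a lower bound, not for the identity itself.
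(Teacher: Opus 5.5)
Your proposal is correct and follows essentially the same route as the paper's proof: the normalising constants cancel in each ratio, you substitute $L_g = Z_g - \delta_g$, and you bound $|\delta_g - \delta_h|$ by the spread $\Delta$. Your added observations, that the bound is attained by the max- and min-gap pair and that $\delta_g \ge 0$ is never used, are accurate; the paper leaves them implicit.
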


\begin{proposition}[Reversal]
\label{prop:rev}
LLB ranks $M_h$ above $M_g$ while the evidence favours $M_g$ if and only if
$0 < Z_g - Z_h < \delta_g - \delta_h$. The set of competitors that are reversed against a
given $M_g$ is thus an interval of width $\delta_g - \delta_h$ in log evidence.
\end{proposition}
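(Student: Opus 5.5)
The plan is to reduce everything to Proposition~\ref{prop:bf} and then unpack the definition of ``ranks above''. LLB ranks $M_h$ above $M_g$ exactly when $\hat w_h > \hat w_g$, which is the same as $L_h > L_g$. The evidence favours $M_g$ exactly when $Z_g > Z_h$, that is, $w_g > w_h$. Both are statements about log ratios, so I would write the reported log Bayes factor, as given by Proposition~\ref{prop:bf}, in the form
\begin{equation*}
L_g - L_h \;=\; (Z_g - Z_h) - (\delta_g - \delta_h),
\end{equation*}
which follows directly from $L_g = Z_g - \delta_g$. The normalising constants of $\hat w$ and $w$ cancel in the ratios. This is why only the pair $(g,h)$ enters and the rest of the candidate set plays no role.

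The two conditions then combine as follows. Reversal means $Z_g - Z_h > 0$ and $L_g - L_h < 0$. Substituting the identity, the second is $Z_g - Z_h < \delta_g - \delta_h$, and together they give $0 < Z_g - Z_h < \delta_g - \delta_h$. Each substitution is an equivalence, so both directions of the ``if and only if'' come out at once. As a byproduct, reversal forces $\delta_g > \delta_h$: only a model whose bound is looser than its competitor's can be overtaken by a worse competitor.

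For the interval statement, I would fix $M_g$ and its gap $\delta_g$ and regard a competitor through the pair $(Z_h, \delta_h)$. For a fixed competitor gap $\delta_h$, the condition is $Z_h \in (Z_g - (\delta_g - \delta_h),\, Z_g)$. This is an open interval in log evidence of width $\delta_g - \delta_h$, and it is empty when $\delta_g \le \delta_h$. When the competitors have tight bounds ($\delta_h = 0$), it becomes $(L_g, Z_g)$, which recovers the window of Section~\ref{sec:reversal}.

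The only point needing care, rather than a real obstacle, is this interval claim. The width depends on $h$ through $\delta_h$, so the ``set of competitors'' is an interval only once the competitor's gap is held fixed, or when all competitors share a common gap. I would state the claim with that qualification, so that the window is indexed by $\delta_h$. I would also treat ties ($L_g = L_h$ or $Z_g = Z_h$) as non-strict, which is what the strict inequalities in the statement encode.
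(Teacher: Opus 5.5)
Your proposal is correct and matches the paper's proof: you rewrite $L_h > L_g$ as $Z_g - Z_h < \delta_g - \delta_h$ using $L_g = Z_g - \delta_g$, then intersect with $Z_g - Z_h > 0$. Your remark that the interval reading needs $\delta_h$ held fixed (for example $\delta_h = 0$, which gives the window $(L_g, Z_g)$) is a fair sharpening of a point the paper's ``thus'' leaves implicit.
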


Proposition~\ref{prop:rev} is the formal version of the window in
Section~\ref{sec:reversal}: with $M_g$ the hierarchical model written centered and $M_h$ a competitor whose own bound is tight ($\delta_h \approx 0$), the width is $\delta_g$: the shortfall of the centered bound against the true evidence, $0.805$ nats at $K=2000$ and $2.100$ at $K=1$. This is slightly larger than the $0.790$ separating the two \emph{bounds} in Table~\ref{tab:bounds}, because the non-centered bound is itself $0.015$ nats short.

\begin{proposition}[Model posterior error]
\label{thm:tv}
$\;\mathrm{TV}(\hat w, w) \;\le\; \tanh(\Delta/4)$, and the bound is tight: it is attained
in the limit by a two-group configuration carrying true mass
$p^\star = 1/(e^{\Delta/2}+1)$ on the group with the smallest gap.
\end{proposition}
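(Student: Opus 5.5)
Since $\hat w_g \propto w_g e^{-\delta_g}$, the LLB posterior is an exponential tilt of the true one: $\hat w_g = w_g e^{-\delta_g}/\sum_h w_h e^{-\delta_h}$. I will bound the total-variation distance directly, without passing through KL or Pinsker. The latter would only give a bound of order $\Delta$, not the sharp $\tanh(\Delta/4)$. Write $c_g = e^{-\delta_g}$. Then every ratio $c_g/c_h$ lies in $[e^{-\Delta}, e^{\Delta}]$, and the total-variation distance is unchanged by a common shift of the $\delta_g$. So I may take $\delta_g \in [0,\Delta]$, and only the spread enters.

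\textbf{Step 1 (reduction to events).} I use $\mathrm{TV}(\hat w,w)=\max_S\big(\hat w(S)-w(S)\big)$. Fix a set $S$ with $w(S)=p$. Then
\[
\hat w(S)=\frac{\sum_{g\in S}w_g c_g}{\sum_{g\in S}w_g c_g+\sum_{g\notin S}w_g c_g}.
\]
This is increasing in the first sum and decreasing in the second. It is therefore at most $\frac{p}{p+(1-p)e^{-\Delta}}$, with equality when $c_g=1$ on $S$ and $c_g=e^{-\Delta}$ off $S$. This is the two-group configuration, and the reduction holds for any number $G$ of candidates.

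\textbf{Step 2 (scalar optimisation).} It remains to maximise
\[
f(p)=\frac{p}{p+(1-p)a}-p,\qquad a=e^{-\Delta},
\]
over $p\in[0,1]$. Setting $f'(p)=0$ gives $a=(p+(1-p)a)^2$, so $p+(1-p)a=\sqrt a$. Solving yields $p^\star=\sqrt a/(1+\sqrt a)=1/(e^{\Delta/2}+1)$, which matches the stated mass. Substituting back,
\[
f(p^\star)=\frac{1-\sqrt a}{1+\sqrt a}=\tanh(\Delta/4).
\]
Concavity of $f$ (or a check of the endpoints, where $f=0$) confirms this is the maximum.

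\textbf{Step 3 (tightness).} The extremal configuration puts true mass $p^\star$ on the group with the smallest gap $\delta=0$ and mass $1-p^\star$ on the group with gap $\Delta$. The statement says the bound is attained ``in the limit'', so I will check whether the paper's constraints force a limiting argument, for example strict positivity of the gaps. If they do, I approximate the extremal configuration by admissible ones and invoke continuity. If they do not, two models realise the bound exactly.

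The main obstacle is Step 1: I must justify that the supremum over arbitrary $\delta$-profiles and arbitrary $G$ reduces to this two-level configuration. The monotonicity argument applied to the worst event $S$ settles it, but the point is that the event $S$ must be fixed before optimising over the $c_g$.
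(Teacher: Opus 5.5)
Your argument is correct. It ends in the same scalar problem as the paper: your $f(p)$ is the paper's $p(b/S-1)$ with $b=1$, $a=e^{-\Delta}$, and you get the same stationarity condition, the same $p^\star$ and the same value $\tanh(\Delta/4)$.

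The two proofs differ in how they reduce to two groups. The paper writes $\mathrm{TV}=\sum_{g:\,r_g>S}w_g(r_g/S-1)$ with normaliser $S=\sum_h w_h r_h$. It treats this as the $w$-expectation of the convex map $r\mapsto\max(r/S-1,0)$, and spreads the law of $r$ onto the endpoints $\{a,b\}$ while holding its mean, hence the normaliser, fixed. This tacitly uses that the total variation depends on the configuration only through the law of $r$ under $w$, so that groups may be split.

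You instead take the maximum over events and use that $A/(A+B)$ is increasing in $A$ and decreasing in $B$. This bounds every event at once, for any $G$, and never has to control the normaliser separately. It is shorter and entirely elementary. It also shows directly why the extremal configuration puts gap $0$ on exactly the event where $\hat w$ exceeds $w$.

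The paper's convexity route has wider reach. Since $\hat w_g/w_g=r_g/S$, the same extreme-point argument bounds any $f$-divergence $\sum_g w_g\phi(\hat w_g/w_g)$ with convex $\phi$ in terms of $\Delta$. Your event characterisation is specific to total variation.

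Your Step 3 need not hedge. The value $\delta_g=0$ is admissible, and true masses $(p^\star,1-p^\star)$ arise from $Z_1-Z_2=-\Delta/2$. So two models attain $\tanh(\Delta/4)$ exactly, which is also what the paper's own computation exhibits.
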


This inequality is not new, and it is worth being precise about what is. The spread
$\Delta$ is exactly the Hilbert projective metric between the two weight vectors: since
$\hat w_g \propto w_g e^{-\delta_g}$, we have
$\log(\hat w_g / w_g) = -\delta_g - \log S$ for a common constant $S$, so
\begin{equation}
\mathcal{H}(\hat w, w) \;=\; \max_g \log \frac{\hat w_g}{w_g} - \min_g \log \frac{\hat w_g}{w_g}
\;=\; \max_g \delta_g - \min_g \delta_g \;=\; \Delta .
\end{equation}
\citet{cohen2023hilbert} prove a sharp bound on total variation in terms of Hilbert
distance, of which $\tanh(\mathcal{H}/4)$ is the relevant form, so
Proposition~\ref{thm:tv} is their result specialised to this setting. What the
specialisation contributes is the identification: an abstract projective distance becomes a
quantity one can read directly off an inference pipeline, namely the spread of the
evidence-bound shortfalls across candidates. We give a short self-contained proof in
Appendix~\ref{app:proofs} because it makes the extremal configuration explicit, which is
what turns the bound into the concrete percentages quoted below.

\begin{corollary}[Downstream targets]
\label{cor:down}
For any target $z$, the LLB answer $\hat p(z\mid x) = \sum_g \hat w_g\, p(z\mid x, M_g)$
satisfies $\mathrm{TV}\big(\hat p(\cdot\mid x), p(\cdot\mid x)\big) \le \tanh(\Delta/4)$.

\end{corollary}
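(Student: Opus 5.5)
The plan is to reduce Corollary~\ref{cor:down} to Proposition~\ref{thm:tv} by showing that mixing the fixed per-model posteriors $p(z\mid x, M_g)$ can only contract total variation. Both answers are mixtures of the same components: the true target posterior is $p(z\mid x) = \sum_g w_g\, p(z\mid x, M_g)$, and the LLB answer uses $\hat w_g$ in place of $w_g$. The one thing to confirm is that the components really coincide. The bound gaps $\delta_g$ change only the model weights; the per-model target posterior $p(z\mid x, M_g)$ is the same object on both sides of the comparison. If the implementation's per-model posteriors are themselves approximate, that error is separate, and the corollary is stated for the weighting error alone, exactly as written with $p(z\mid x, M_g)$.

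The main step is a data-processing (convexity) argument. For any measurable set $A$,
\[
\hat p(A\mid x) - p(A\mid x) \;=\; \sum_g (\hat w_g - w_g)\, P_g(A),
\qquad P_g(A) = p(z\in A\mid x, M_g) \in [0,1].
\]
Since $\sum_g (\hat w_g - w_g) = 0$, I can shift each $P_g(A)$ by the constant $\tfrac12$ without changing the sum. The absolute value is then at most $\tfrac12 \sum_g |\hat w_g - w_g| = \mathrm{TV}(\hat w, w)$. An equivalent route is to split the sum by the sign of $\hat w_g - w_g$: the positive part is at most $\sum_{g:\hat w_g>w_g}(\hat w_g - w_g) = \mathrm{TV}(\hat w,w)$, and the negative part is bounded symmetrically.

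Taking the supremum over $A$ gives $\mathrm{TV}(\hat p, p) \le \mathrm{TV}(\hat w, w)$. Proposition~\ref{thm:tv} then bounds the right-hand side by $\tanh(\Delta/4)$, which proves the corollary. Equivalently, one can view the map $w \mapsto \sum_g w_g P_g$ as a Markov kernel from model labels to targets and invoke the data-processing inequality for TV directly.

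I expect no serious obstacle; the only care needed is measure-theoretic bookkeeping for general target spaces, which the set-wise formulation handles. I would also remark that the bound is attained when the components $p(z\mid x, M_g)$ are mutually singular, for instance when $z$ reveals the model. In that case the contraction is an equality, so the extremal configuration of Proposition~\ref{thm:tv} shows the corollary is sharp as well.
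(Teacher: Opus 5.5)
Your proposal is correct and follows essentially the same route as the paper, which writes $\hat p - p = \sum_g (\hat w_g - w_g)\, p(\cdot\mid x, M_g)$, bounds its total variation by $\tfrac12\sum_g |\hat w_g - w_g| = \mathrm{TV}(\hat w, w)$ using that each component is a probability measure, and then invokes Proposition~\ref{thm:tv}. Your set-wise centering at $\tfrac12$ is just the sup-over-sets version of the paper's half-$L^1$ triangle-inequality step, and your sharpness remark for mutually singular components is a correct addition that the paper does not state.
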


Substituting the gaps we measured, and taking the competitor bounds to be tight so that
$\Delta$ equals the centered gap, Theorem~\ref{thm:tv} gives a worst-case model-posterior error of $\tanh(0.805/4) = 0.199$ at $K = 2000$ and $\tanh(2.100/4) = 0.482$ at $K=1$.
That is, purely because a hierarchical model was written in the parameterisation language
models overwhelmingly prefer (Section~\ref{sec:frequency}), the reported model posterior
can be wrong by $19.9\%$ to $48.2\%$ in total variation, and by
Corollary~\ref{cor:down} so can the final answer. All three statements were also checked numerically, including that the bound of Theorem~\ref{thm:tv} is attained and never violated over random configurations of evidences and gaps; the details are in Appendix~\ref{app:proofs}.

\begin{remark}
Because the bound depends only on the \emph{spread} $\Delta$, tightening individual bounds does not by itself improve it. Replacing each $\delta_g$ by $\delta_g' \le \delta_g$ never worsens any single model's bound, but its effect on $\Delta$ must be evaluated after the fact: repairing only the largest gap can \emph{increase} the spread, as $(\delta_1,\delta_2) = (10,9)$ with $\Delta=1$ becomes $(0,9)$ with $\Delta'=9$. What the repair guarantees is per-model, not global; the guarantee becomes global only under an additional condition, such as no repaired gap falling below the original minimum.
\end{remark}

\subsection*{What this means in practice}
\label{app:consequence}
 Take one model and one dataset. Have \texttt{gpt-4o-mini} write
the program and LLB scores it at the centered bound; have \texttt{claude-opus-5} write it and
LLB scores \emph{the same model} $0.79$ to $1.81$ nats higher, a weight differing by $2.2$ to
$6.1\times$ and a model posterior moving by up to $19.9$--$48.2\%$ in total variation
(Theorem~\ref{thm:tv}). Two practitioners differing only in which API they call can obtain different rankings; and with a model like \texttt{gpt-5.5}, which splits its habits evenly, the weight of a fixed hypothesis varies between runs of the same practitioner. That is a different failure from a consistent tilt, not automatically a larger one: which is worse depends on the decision the posterior feeds.

The direction depends on the problem while the habit does not. For weakly identified
Gaussian hierarchies a model that writes centered penalises every hierarchical hypothesis
against the simple ones it competes with, whose bounds are already tight; where the sign
reverses (Section~\ref{sec:breadth}) the same habit tilts the comparison the other way. The
errors therefore do not average out over a workflow and are harder to notice than a
consistent bias would be. In every case the weight moves with the writing rather than with
the evidence, and a quantity meant to measure evidence should not be readable off the API
being called.

\subsection*{Proofs}
\label{app:proofs}

\paragraph{Proposition~\ref{prop:bf} (Bayes factors).}
Both $w$ and $\hat w$ are normalised, so the normalising constants cancel in a ratio:
$\log\frac{\hat w_g}{\hat w_h} = L_g - L_h = (Z_g - \delta_g) - (Z_h - \delta_h)
= \log\frac{w_g}{w_h} - (\delta_g - \delta_h)$. Hence the error is exactly
$|\delta_g - \delta_h| \le \Delta$. $\square$

\paragraph{Proposition~\ref{prop:rev} (Reversal).}
LLB prefers $M_h$ to $M_g$ iff $L_h > L_g$, i.e. $Z_h - \delta_h > Z_g - \delta_g$, i.e.
$Z_g - Z_h < \delta_g - \delta_h$; the evidence prefers $M_g$ iff $Z_g - Z_h > 0$. The two
hold simultaneously exactly on $0 < Z_g - Z_h < \delta_g - \delta_h$. $\square$

\paragraph{Theorem~\ref{thm:tv} (Model posterior error).}
Put $r_g = e^{-\delta_g}$, so that $\hat w_g = w_g r_g / S$ with $S = \sum_h w_h r_h$, and
$r_g \in [a,b]$ with $b/a = e^{\Delta}$. Since $\sum_g w_g = \sum_g \hat w_g = 1$,
\begin{equation}
\mathrm{TV}(\hat w, w) \;=\; \tfrac12 \sum_g w_g\left|\tfrac{r_g}{S}-1\right|
\;=\; \sum_{g:\,r_g > S} w_g\left(\tfrac{r_g}{S}-1\right).
\end{equation}
Fix $[a,b]$ and maximise over the distribution of $r$. The map $r \mapsto \max(r/S - 1, 0)$
is convex in $r$ for fixed $S$, and moving any mass toward an endpoint of $[a,b]$ while
holding the mean fixed cannot decrease the objective, so an extremal configuration places
all mass at $r=a$ and $r=b$. Let $p$ be the mass at $r=b$. Then $S = pb + (1-p)a$ and
\begin{equation}
\mathrm{TV} \;=\; p\left(\frac{b}{S}-1\right) \;=\; \frac{p(1-p)(b-a)}{S}.
\end{equation}
Writing $t = b/a$ and differentiating $p(1-p)(t-1)/(1 + p(t-1))$ with respect to $p$ gives
the stationarity condition $p^2(t-1) + 2p - 1 = 0$, whence
$p^\star = 1/(\sqrt{t}+1)$ and $S = a\sqrt{t}$. Substituting,
\begin{equation}
\mathrm{TV} \;=\; \frac{p^\star(1-p^\star)(b-a)}{a\sqrt t}
\;=\; \frac{\sqrt t - 1}{\sqrt t + 1}
\;=\; \frac{e^{\Delta/2}-1}{e^{\Delta/2}+1} \;=\; \tanh(\Delta/4).
\end{equation}
Since $r=b$ corresponds to the smallest $\delta$, the extremal configuration carries mass
$p^\star = 1/(e^{\Delta/2}+1)$ on the group with the smallest gap, as claimed. $\square$

\paragraph{Corollary~\ref{cor:down} (Downstream targets).}
$\hat p - p = \sum_g (\hat w_g - w_g)\, p(\cdot\mid x, M_g)$. Taking total variation and
using that each $p(\cdot\mid x, M_g)$ is a probability measure,
$\mathrm{TV}(\hat p, p) \le \tfrac12 \sum_g |\hat w_g - w_g| = \mathrm{TV}(\hat w, w)$,
and the theorem applies. $\square$

\paragraph{Numerical verification.} We checked all three statements by direct simulation.
Proposition~\ref{prop:bf} holds to a maximum discrepancy of $4.9\times 10^{-15}$ over
$2\times 10^4$ random configurations ($G$ drawn uniformly in $[2,8)$, $Z\sim\mathcal N(0,3^2)$,
$\delta\sim\mathrm{U}[0,4]$). For Theorem~\ref{thm:tv} we searched over random
configurations at each of seven values of $\Delta$ and separately evaluated the analytic
extremal configuration; the maximum attained $\mathrm{TV}$ matched $\tanh(\Delta/4)$ to four
decimal places at every $\Delta$, and no configuration violated the bound in
$2\times 10^5$ random draws.

\section{Verification, not detection}
\label{app:repair}

Deciding whether two arbitrary programs are equivalent is undecidable in general, and its
statistical relaxation fails in the way just described. We therefore avoid the question.
Instead of asking whether two \emph{given} programs match, we generate reparameterisations
ourselves and verify them:

\begin{algorithm}[t]
\caption{Template-verified reparameterisation search for one LLB candidate}
\label{alg:repair}
\begin{algorithmic}[1]
\STATE {\bfseries Input:} program $m$; template library $\{\mathcal{T}_r\}_{r=1}^{R}$;
tolerance $\epsilon$; probe count $S$; importance samples $K$
\STATE $\mathcal{C} \leftarrow \{m\}$ \hfill\COMMENT{the program itself is always a candidate}
\FOR{$r = 1$ {\bfseries to} $R$}
  \STATE $(m_r, T_r, \log|\det\nabla T_r|) \leftarrow \mathcal{T}_r(m)$
  \hfill\COMMENT{returns $\bot$ if the template does not apply}
  \IF{$m_r \neq \bot$}
    \STATE draw $u_1,\dots,u_S$ from a diffuse distribution on the latent space
    \STATE $e_r \leftarrow \max_{s\le S}
      \big|\log p_{m}(T_r(u_s), x) + \log|\det\nabla T_r(u_s)| - \log p_{m_r}(u_s, x)\big|$
    \STATE {\bfseries if} $e_r \le \epsilon$ {\bfseries then}
      $\mathcal{C} \leftarrow \mathcal{C} \cup \{m_r\}$
      \hfill\COMMENT{verified: same measure, so same $\log p(x\mid m)$}
  \ENDIF
\ENDFOR
\FOR{$c \in \mathcal{C}$}
  \STATE fit $q_c$ by variational inference; $\;\mathcal{L}_K^{c} \leftarrow$ bound of $c$ under $q_c$
\ENDFOR
\STATE {\bfseries return} $\mathcal{L}_K^{\star} = \max_{c\in\mathcal{C}} \mathcal{L}_K^{c}$
\end{algorithmic}
\end{algorithm}

Algorithm~\ref{alg:repair} states the procedure. Line~7 is the verification step: it
checks the change-of-variables identity \eqref{eq:cov} at $S$ probe points rather than
attempting to prove equivalence symbolically, and line~8 admits a candidate only if the
identity holds to tolerance. Note what is \emph{not} in the algorithm: at no point are two
different generated programs compared with each other. Candidates are only ever checked
against the program that produced them.

Two different guarantees are in play here, and it is worth separating them.

\emph{False merges are ruled out by the design.} A candidate is only ever compared against
the program that generated it, never against another hypothesis, so no procedure in
Algorithm~\ref{alg:repair} can ever fuse two distinct models. This is the failure mode that
sank statistical detection in Section~\ref{sec:detection}, and it is structurally absent
here.

\emph{Validity of the bound rests on the templates, not on the probe check.} What
Proposition~\ref{prop:max} requires is that \eqref{eq:cov} hold \emph{for all} $u$, and no
finite sample of probe points can establish that. We therefore do not claim the check
proves equivalence. The templates are fixed, few, and elementary---for the location-scale
template, $T(\mu,\log\tau,z) = (\mu,\log\tau,\mu+\tau z)$ with
$\log|\det\nabla T| = J\log\tau$, an identity one verifies symbolically once and for all,
offline and independently of any program. Correctness of a candidate thus follows from the
template being correct and \emph{applicable}. The probe check tests applicability and
implementation---whether the program really has the location-scale form the template
assumes, and whether the transformation was applied to the right variables---which is where
errors actually arise in practice, rather than defending against an adversary who supplies
a transformation designed to evade sampling.

For the errors it is meant to catch, the check is reliable in a quantifiable sense.

\begin{proposition}[Detection probability]
\label{prop:detect}
Suppose a candidate violates \eqref{eq:cov} on a set $A$ with probe-measure
$\mathbb{P}(u \in A) = \delta$. Then $S$ independent probes miss the violation with
probability $(1-\delta)^S$. At $S = 2\times 10^{5}$, a violation on a region of probe-measure
$10^{-4}$ escapes with probability under $2\times10^{-9}$.
\end{proposition}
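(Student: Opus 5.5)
The plan is to model the $S$ probes as i.i.d.\ draws $u_1,\dots,u_S$ from the diffuse probe distribution used in line~6 of Algorithm~\ref{alg:repair}. A violation is "missed" exactly when no probe lands in $A$. This reduces the claim to an independence computation followed by an elementary numerical bound.

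First I will make the event precise. The check of line~7 flags a violation whenever some probe $u_s$ has identity error exceeding $\epsilon$. I take $A$ to be the set where the violation exceeds the tolerance, so that a probe in $A$ is always caught. The check then passes if and only if $u_s\notin A$ for every $s\le S$. Each event $\{u_s\notin A\}$ has probability $1-\delta$, and the draws are independent. So the miss probability is $\prod_{s=1}^S \mathbb{P}(u_s\notin A)=(1-\delta)^S$ exactly.

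Second, for the numerical claim I will use $1-\delta\le e^{-\delta}$, which gives $(1-\delta)^S\le e^{-S\delta}$. With $S=2\times10^5$ and $\delta=10^{-4}$ we have $S\delta=20$. Hence the miss probability is at most $e^{-20}\approx 2.06\times10^{-9}$. This is the stated order, and I will check that it meets the quoted threshold. If the constant is marginal, I will note that the exact value $(1-10^{-4})^{2\times10^5}=\exp\!\big(S\log(1-\delta)\big)$ is strictly smaller than $e^{-20}$. This follows because $\log(1-\delta)<-\delta$, and the second-order term $-S\delta^2/2=-10^{-3}$ lowers the value slightly further.

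The only real obstacle is interpretive rather than technical. The statement counts a "violation" with respect to the tolerance $\epsilon$, so I must define $A$ as the super-tolerance set; otherwise a probe could land in the violation region and still pass. I will also stress that the guarantee is relative to the probe measure, consistent with the limitations in Appendix~\ref{app:limitations}. It says nothing about sets of probe-measure zero, and it requires the probes to be independent of the candidate, which the algorithm ensures by drawing them afresh.
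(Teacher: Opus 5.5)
The independence argument giving $(1-\delta)^S$ is correct, and it is all there is to the first sentence; the paper states the proposition without further argument. Taking $A$ to be the set where the identity error exceeds $\epsilon$, fixed before the probes are drawn, is the right way to make ``the check passes iff no probe lands in $A$'' exact.

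The gap is in the numerical clause: your fallback step cannot deliver the bound. You already have $e^{-20}\approx 2.061\times10^{-9}$, which is about $3\%$ \emph{above} $2\times10^{-9}$. The second-order term shifts the exponent by only $S\delta^2/2=10^{-3}$, so it multiplies the value by $e^{-10^{-3}}\approx 0.999$. That gives $(1-10^{-4})^{2\times10^{5}}=e^{-20.0010\ldots}\approx 2.059\times10^{-9}$. Being ``strictly smaller than $e^{-20}$'' does not make it smaller than $2\times10^{-9}$.

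In fact the clause is false, not merely marginal. From $\log(1-\delta)\ge-\delta/(1-\delta)$ you get $(1-\delta)^S\ge e^{-20/0.9999}\approx 2.057\times10^{-9}>2\times10^{-9}$. Reading ``probe-measure $10^{-4}$'' as ``at least $10^{-4}$'' does not help, because the boundary case $\delta=10^{-4}$ is the worst one. So no argument can establish ``under $2\times10^{-9}$''.

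Your proof should instead establish $(1-\delta)^S$ and evaluate it as $\approx 2.06\times10^{-9}$. It should then state explicitly that the quoted threshold needs correcting, for example to ``under $2.1\times10^{-9}$'' or ``about $2\times10^{-9}$'', rather than promising to verify it.
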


The converse is equally worth stating: violations on sets of vanishing probe-measure, and a
fortiori on measure-zero sets, carry no such guarantee, and smallness of the violating
region does not by itself bound its effect on the bound, since the region could coincide
with the posterior's high-density area. A transformation constructed adversarially to
differ from a correct one only on a tiny region would therefore pass. Our claim is limited
accordingly: the repair is sound for candidates produced by symbolically verified
templates, with the probe check as a guard against misapplication, not as a proof of
equivalence for transformations of arbitrary provenance.

\begin{proposition}[Validity of the tightest verified bound]
\label{prop:max}
If every candidate $m_r$ satisfies \eqref{eq:cov} with respect to $m$ \emph{for all} $u$,
then
$\mathcal{L}_K^{\star} = \max_r \mathcal{L}_K^{m_r} \le \log p(x\mid m)$, and
$\mathcal{L}_K^{\star} \ge \mathcal{L}_K^{m}$.
\end{proposition}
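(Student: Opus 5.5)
The plan is to prove the two inequalities separately. Each rests on a standard fact: an importance-weighted bound built from any proposal never exceeds the log marginal likelihood of the program it is evaluated on. The first step is to fix a candidate $m_r$ in the verified set, together with its fitted proposal $q_r$ on the latent space of $m_r$. By Jensen's inequality applied to the concave logarithm,
\begin{equation*}
\mathcal{L}_K^{m_r} = \mathbb{E}\Big[\log \tfrac1K\sum_{k=1}^K \tfrac{p_{m_r}(u_k,x)}{q_r(u_k)}\Big] \le \log \mathbb{E}\Big[\tfrac1K\sum_{k=1}^K \tfrac{p_{m_r}(u_k,x)}{q_r(u_k)}\Big] = \log \int p_{m_r}(u,x)\,du = \log p(x\mid m_r).
\end{equation*}
This holds for every $K$ and every proposal $q_r$ whose support covers that of $p_{m_r}(\cdot,x)$; if the support does not cover it, the inequality still holds, since the integral only shrinks.

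The second step transfers this to $m$, and it is where the hypothesis that \eqref{eq:cov} holds for \emph{all} $u$ is needed. Integrating both sides of \eqref{eq:cov}, exponentiated, over the latent space of $m_r$ and applying the change-of-variables formula for the diffeomorphism $T_r$ gives
\begin{equation*}
\int p_{m_r}(u,x)\,du = \int p_m\big(T_r(u),x\big)\left|\det\nabla T_r(u)\right| du = \int p_m(v,x)\,dv = p(x\mid m).
\end{equation*}
Hence $\log p(x\mid m_r) = \log p(x\mid m)$ for each verified $r$. Taking the maximum over the finitely many candidates preserves the inequality, so $\mathcal{L}_K^\star \le \log p(x\mid m)$.

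The lower inequality $\mathcal{L}_K^\star \ge \mathcal{L}_K^{m}$ is then immediate. Line~2 of Algorithm~\ref{alg:repair} puts $m$ itself into $\mathcal{C}$, so the maximum ranges over a set containing $\mathcal{L}_K^m$. I would state explicitly that the maximum is taken over $\mathcal{C}$, including $m$ (the identity template, with $T=\mathrm{id}$ and zero log-Jacobian, trivially satisfies \eqref{eq:cov}). This reconciles the ``$\max_r$'' in the statement with the algorithm.

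The main obstacle is measure-theoretic care in the change of variables. $T_r$ must be a genuine diffeomorphism onto the whole latent space of $m$, not merely a map satisfying the pointwise identity. For the location-scale template $T(\mu,\log\tau,z)=(\mu,\log\tau,\mu+\tau z)$ this is clear: $\tau=e^{\log\tau}>0$, and the inverse $z=(\theta-\mu)/\tau$ is smooth. I would note it as the standing assumption behind \eqref{eq:cov}. I would also remark that the result concerns exact expectations $\mathcal{L}_K$, not their Monte Carlo estimates, consistent with the caveat in Section~\ref{sec:verification}.
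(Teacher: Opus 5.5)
Your proof is correct and follows the paper's argument: each verified $m_r$ has the same evidence as $m$, each $\mathcal{L}_K^{m_r}$ is a lower bound on that common quantity, a maximum of finitely many such lower bounds is again a lower bound, and including $m$ itself among the candidates gives $\mathcal{L}_K^{\star} \ge \mathcal{L}_K^{m}$. Your version differs only in spelling out two steps the paper treats as standard: the Jensen argument for the importance-weighted bound, and the change-of-variables integral that gives the two programs equal evidence.
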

\begin{proof}
Each $m_r$ is a reparameterisation of $m$, hence $\log p(x \mid m_r) = \log p(x\mid m)$,
and $\mathcal{L}_K^{m_r}$ is a lower bound on $\log p(x\mid m_r)$. A maximum of finitely
many lower bounds on a common quantity is a lower bound on that quantity. Including
$m$ itself among the candidates gives the second inequality.
\end{proof}

The repair can therefore only tighten a model's bound, and cannot overshoot \emph{that model's} true evidence. It does not follow that pairwise rankings can only improve: if a competing model's own bound is looser still, the comparison between the two can move in either direction, which is the same caveat as the remark on $\Delta$ in Appendix~\ref{app:cost}. What makes verification rather than detection the safe primitive is the asymmetry of the errors: a verification failure costs one candidate, whereas a detection failure costs a Bayes factor.

\paragraph{The verifier is reliable.} The guarantee is only as good as step~2. We tested the verifier against deliberately corrupted transformations---a missing Jacobian, an off-by-one Jacobian exponent, a wrong scale map, and a shift altered by $1\%$. The correct transformation violates \eqref{eq:cov} by $5.7\times10^{-14}$ while the smallest violation among the corrupted ones is $4.45$: fourteen orders of magnitude of separation, so the tolerance is not a delicate choice. Per-transformation figures are in Appendix~\ref{app:details}.

\paragraph{The window closes.} Applying the repair to the centered eight-schools program raises its bound from $-32.116$ to $-31.326$, within $0.015$ nats of the quadrature evidence $-31.3113$. The reversal window of Section~\ref{sec:reversal} narrows from $0.805$ nats to $0.015$, and all eight reversed competitors in Table~\ref{tab:reversal} are restored to their correct ranking---the nearest of them sits $0.145$ nats below the repaired bound, so the residual slack is an order of magnitude too small to reverse any of them. What remains is the ordinary looseness of a variational bound, which affects both writings alike; what the repair removes is the part that depended on which writing the language model produced.

\subsection*{Scope of the repair}
 The repair addresses the pathology we measured, not
reparameterisation invariance in general. Its reach is the template library, and the
library we use covers location-scale hierarchies. That is the right target for the effect
documented in Section~\ref{sec:frequency}---the programs needing repair are almost entirely
centered location-scale hierarchies, and centered $\leftrightarrow$ non-centered is exactly
the transformation that is easiest to state and to verify symbolically---and applying it
removes the dependence on which language model wrote the program, since both writings are
then scored at the same, tighter, bound. But LLB in general admits arbitrary probabilistic
programs, and equivalent writings of a model can differ in ways no location-scale template
reaches: nonlinear latent transformations, simplex and ordering constraints, alternative
covariance decompositions, matrix-valued parameters, and---hardest of all---marginalisation
versus augmentation, where the two writings do not even share a latent space and are related
by an integral rather than a diffeomorphism, so that \eqref{eq:cov} does not apply. Section
\ref{sec:repair} should therefore be read as a practical mitigation of the dominant
centered/non-centered pathology rather than as a general reparameterisation-invariant LLB.
A program the library does not reach simply keeps its current bound, so the failure mode is
a silent no-op rather than a wrong answer; but the weight of such a program remains
parameterisation-dependent, and the guarantee of Section~\ref{sec:theory} then applies to it
unchanged.

\section{Experimental details}
\label{app:details}

\paragraph{Compute.} All numerical experiments are small. The variational fits, evidence
bounds, importance-sampling estimates, verifier checks and theory simulations run on CPU
(a shared Linux server, Intel-class, using $8$ threads via \texttt{OMP\_NUM\_THREADS}); each
individual run completes in minutes. Only the equivalence-detection study
(Appendix~\ref{app:mmd}) uses a GPU---a single NVIDIA RTX A6000 (49\,GB)---because it needs
kernel matrices for $2n$ points at $n$ up to $8000$ with $200$ permutations per test; on CPU
the same sweep is dominated by recomputing the kernel and is roughly two orders of magnitude
slower. The language-model generations are API calls issued from a laptop and require no
local accelerator. No model training is involved anywhere in the paper.

\paragraph{Eight schools and the evidence bounds (Section~\ref{sec:eightschools}).}
The model is $\mu\sim\mathcal N(0,5^2)$, $\tau\sim\text{Half-Cauchy}(5)$,
$\theta_j\sim\mathcal N(\mu,\tau^2)$, $y_j\sim\mathcal N(\theta_j,\sigma_j^2)$ with the
standard data $y = (28,8,-3,7,-1,1,18,12)$ and $\sigma = (15,10,16,11,9,11,10,18)$. Both
programs are written in the unconstrained space ($\tau$ through $\log\tau$, with the
Jacobian included). The variational family is mean-field Gaussian in that space, fitted by
Adam (learning rate $0.02$, cosine-annealed, $15000$ steps, $64$ Monte Carlo samples per
step, $3$ restarts keeping the best, $3$ seeds); all computation is in double precision. The
bounds $\mathcal L_K$ are evaluated with $4000$ outer replicates at each
$K\in\{1,5,20,100,500,2000\}$. The effective sample sizes quoted in
Section~\ref{sec:eightschools} are computed from the same fitted proposal.

\paragraph{The change-of-variables control.} The identity
$\log p_m(T(u),x) + \log|\det\nabla T(u)| = \log p_{m'}(u,x)$ is evaluated at
$2\times10^5$ points drawn from a diffuse Gaussian on the unconstrained space (component
standard deviations $2$ for $\mu$, $1$ for $\log\tau$, $1.5$ for the school-level
coordinates), and we report the maximum absolute violation.

\paragraph{Competing models (Section~\ref{sec:reversal}).} The complete-pooling model is
$\theta_j = \mu$ for all $j$ with $\mu\sim\mathcal N(0,5^2)$; the independent-effects family
is $\theta_j\sim\mathcal N(0,\tau^2)$ with $\tau$ fixed, so $y_j\sim\mathcal N(0,\tau^2+\sigma_j^2)$
marginally. Both are conjugate Gaussian, so their log evidences are analytic and no bound is
involved; this is deliberate, since it is the asymmetry between tight competitor bounds and
a loose hierarchical bound that produces the reversal.

\paragraph{Ground-truth marginal likelihood (Section~\ref{sec:reversal}).} For the
eight-schools model the evidence can be computed without any variational bound. Conditional
on $\tau$, the school effects marginalise analytically,
$y_j \mid \mu,\tau \sim \mathcal N(\mu, \tau^2+\sigma_j^2)$, and $\mu$ is conjugate to that
Gaussian likelihood, so $\log p(y\mid\tau)$ is available in closed form. Only the integral
over $\tau$ remains, and the substitution $\tau = 5\tan(\pi u/2)$---the inverse CDF of the
half-Cauchy$(5)$ prior---turns it into an integral of $p(y\mid\tau(u))$ against the uniform
measure on $(0,1)$, where the integrand is smooth and bounded. Midpoint quadrature at
$n \in \{200, 10^3, 5\times10^3, 2\times10^4, 10^5\}$ and Gauss-Legendre quadrature agree to
six decimal places, giving $\log p(y) = -31.3113$. The competitor evidences in
Table~\ref{tab:reversal} are closed-form Gaussian computations requiring no quadrature.

\paragraph{Equivalence detection (Section~\ref{sec:repair}).}
\label{app:mmd}
Prior predictive samples are drawn by forward simulation. The test statistic is the unbiased
quadratic MMD with an RBF kernel whose bandwidth is set by the median heuristic on the pooled
sample; the null is obtained by $200$ random permutations of the pooled labels, and each cell
of Table~\ref{tab:mmd} is $20$ independent repetitions at $\alpha=0.05$. Half-Cauchy draws
are clipped at $\pm 10^4$ identically in both arms so that the null remains exactly true for
the equivalent pairs. Kernel matrices are computed once per test and permutations are index
reshuffles, in single precision on the GPU.

\paragraph{The verifier (Section~\ref{sec:repair}).} Candidate transformations are checked at
$2\times 10^5$ probe points from the same diffuse proposal as above, with acceptance tolerance
$\epsilon = 10^{-8}$ on the maximum violation. The corrupted transformations we test, with their maximum violations, are: omitting the Jacobian entirely ($3.56\times10^{1}$); using $(J-1)\log\tau$ instead of $J\log\tau$ ($4.45$); mapping $\theta = \mu + \tau^2 z$ while declaring the Jacobian of $\theta = \mu + \tau z$ ($3.24\times10^{6}$); and mapping $\theta = 1.01\mu + \tau z$ ($2.02\times10^{1}$). The correct transformation gives $5.68\times10^{-14}$.

\paragraph{The published inference procedure (Section~\ref{sec:aspublished}).} LLB fits a
Gaussian to posterior samples by matching their mean and covariance and uses it as the
proposal for an importance-weighted bound. For eight schools and the synthetic $J=64$
problem we compute those moments exactly rather than from samples. Conditional on $\tau$,
$(\mu,\theta)$ is Gaussian with precision $P$ assembled from the hierarchical prior and the
observation precisions, so its conditional mean and covariance are available in closed form;
the non-centered coordinates follow by the linear map $z = (\theta-\mu)/\tau$. Integrating
the first and second moments against $p(\tau\mid y)$ on the same quadrature grid used for
the evidence gives the posterior mean and full covariance in each unconstrained
parameterisation. The importance-weighted bound is then evaluated by Monte Carlo with the
number of draws held fixed as $K$ varies, over three seeds. Using exact moments removes the
Monte Carlo error that a sampler would introduce, which favours the method under test.

\paragraph{Breadth sweep (Section~\ref{sec:breadth}).} All configurations use the same
variational procedure as the case study, with $8000$ optimisation steps, $48$ Monte Carlo
samples per step, $2$ restarts and $2$ seeds. The likelihood families are: Gaussian with
known $\sigma_j$ (eight schools); hierarchical logistic, $y_j \sim \text{Binomial}(20,
\text{logit}^{-1}\theta_j)$; and hierarchical Poisson, $y_j \sim \text{Poisson}(e^{\theta_j})$.
All share the head $\mu\sim\mathcal N(0,5^2)$, $\tau\sim\text{Half-Cauchy}(5)$,
$\theta_j\sim\mathcal N(\mu,\tau^2)$. The dimension sweep draws synthetic $y_j$ and
$\sigma_j$; the funnel-severity sweep fixes $\sigma_j = \sigma$ and varies $\sigma$, larger
$\sigma$ meaning the data say less about $\tau$. The three-level model is
$\mu \to$ five group means (scale $\tau_1$) $\to$ four unit means per group (scale $\tau_2$)
$\to$ Gaussian observations, with both scales half-Cauchy; its non-centered writing
standardises both levels.

\paragraph{Tail diagnostics.} For each fit we draw $2\times10^{5}$ samples from the fitted
proposal and form the importance weights. $\hat k$ is the shape parameter of a generalised
Pareto distribution fitted to the largest $\min(0.2n, 3\sqrt n)$ weights by the method of
\citet{vehtari2024pareto}, computed in log space to avoid the overflow that a direct
implementation suffers. As an independent check we also compute a Hill tail-index estimate
on the top $5\%$ of weights. Both estimators were validated on synthetic generalised-Pareto
samples with known shape $\xi \in \{0.2, 0.5, 0.8, 1.2\}$, recovering
$\{0.24, 0.48, 0.86, 1.12\}$ and $\{0.38, 0.59, 0.85, 1.22\}$ respectively; the two agree
closely on the real fits as well. Values above $0.7$ indicate importance-sampling estimates that should not be trusted. Note that the exact weights always have a finite first moment, equal to $p(x)$; large $\hat k$ characterises the estimated tail and the resulting unreliability of importance sampling, and should not be read as a claim about the exact moments.

\paragraph{Language-model generation (Section~\ref{sec:frequency}).}
\label{app:llm}
Each prompt states the problem in natural language and then asks: \emph{``Write a NumPyro
model for this problem. Output ONLY a Python code block containing a function
\texttt{def model(...)} using numpyro.sample / numpyro.plate. No explanation.''} We sample
$20$ independent completions per model per problem at default temperature (temperature $1$
where the API accepts it; the \texttt{gpt-5} family fixes it), with no retries on content and
no filtering: every completion returned is classified. Models are \texttt{gpt-4o-mini},
\texttt{gpt-4.1}, \texttt{gpt-5.1}, \texttt{gpt-5.5} (OpenAI), \texttt{deepseek-v4-pro}
(DeepSeek) and \texttt{claude-opus-5} (Anthropic).

The classifier extracts the fenced code block and applies, in order: (i) non-centered if the
program uses an explicit reparameterisation handler
(\texttt{LocScaleReparam}, \texttt{TransformedDistribution}, \texttt{reparam}), or samples a
variable from $\mathcal N(0,1)$ that subsequently appears in an additive-multiplicative
construction $\text{loc} + \text{scale}\times\text{var}$; (ii) centered if some variable is
sampled from a location-scale distribution whose location \emph{and} scale are themselves
sampled variables, or from a Beta whose shape parameters are sampled; (iii) otherwise
hierarchical-but-unmatched if any latent enters another latent's parameters, and
non-hierarchical if not. Categories (iii) are excluded from the reported shares rather than
adjudicated. An earlier keyword-based version of this classifier, which recognised offset
variables by name (\texttt{raw}, \texttt{tilde}, \texttt{offset}, \dots), misclassified
non-centered programs that name the offset with a single letter; the semantic rule above was
adopted after inspecting the unmatched programs.

\paragraph{Theory simulations (Appendix~\ref{app:proofs}).} Random configurations draw the
number of models uniformly, log evidences from a Gaussian with randomised scale, and gaps
either uniformly on $[0,\Delta]$ or from the two-point set $\{0,\Delta\}$; total variation is
computed exactly from the normalised weight vectors.

\section{Additional tables and figures}
\label{app:tables}

\begin{figure}[t]
\centering
\begin{tikzpicture}
\begin{axis}[
  width=0.895\textwidth,height=4.3cm,
  xbar stacked, xmin=0, xmax=100,
  bar width=6pt,
  xlabel={share of classified hierarchical programs (\%)},
  symbolic y coords={claude-opus-5,deepseek-v4-pro,gpt-5.5,gpt-5.1,gpt-4.1,gpt-4o-mini},
  ytick=data, y dir=reverse,
  yticklabel style={font=\scriptsize\ttfamily},
  xticklabel style={font=\scriptsize}, label style={font=\small},
  legend style={at={(0.5,1.22)},anchor=north,legend columns=2,font=\scriptsize,draw=none},
  nodes near coords style={font=\tiny},
  enlarge y limits=0.16, grid=major, grid style={gray!15}]
\addplot[fill=blue!60!black,draw=none] coordinates
  {(100,gpt-4o-mini) (98.3,gpt-4.1) (94.5,gpt-5.1) (50.9,gpt-5.5)
   (87.7,deepseek-v4-pro) (0,claude-opus-5)};
\addlegendentry{centered}
\addplot[fill=red!35,draw=none] coordinates
  {(0,gpt-4o-mini) (1.7,gpt-4.1) (5.5,gpt-5.1) (49.1,gpt-5.5)
   (12.3,deepseek-v4-pro) (100,claude-opus-5)};
\addlegendentry{non-centered}
\end{axis}
\end{tikzpicture}
\caption{Which parameterisation each language model writes, over $20$ completions on each
of three hierarchical problems. The choice spans the full range from never to always, is
stable within a model across problems, and softens but does not disappear with model
recency.}
\label{fig:vendors}
\end{figure}

\begin{table}[t]
\caption{Competing models with analytic evidence, against the hierarchical model $A$
(true $\log p(x\mid A) = -31.3113$ by quadrature) as scored by LLB when written centered
($-32.116$ at $K=2000$). Models in the window are truly \emph{worse} than $A$ but ranked \emph{above}
it.}
\label{tab:reversal}
\begin{center}
\begin{tabular}{lrc}
\toprule
competing model & $\log p(x\mid B)$ & ranked above $A$ by LLB? \\
\midrule
complete pooling                       & $-30.844$ & (genuinely better) \\
independent effects, $\tau=1$          & $-31.456$ & \textbf{yes, incorrectly} \\
independent effects, $\tau=2$          & $-31.459$ & \textbf{yes, incorrectly} \\
independent effects, $\tau=3$          & $-31.468$ & \textbf{yes, incorrectly} \\
independent effects, $\tau=4$          & $-31.487$ & \textbf{yes, incorrectly} \\
independent effects, $\tau=5$          & $-31.521$ & \textbf{yes, incorrectly} \\
independent effects, $\tau=6$          & $-31.572$ & \textbf{yes, incorrectly} \\
independent effects, $\tau=8$          & $-31.736$ & \textbf{yes, incorrectly} \\
independent effects, $\tau=10$         & $-31.976$ & \textbf{yes, incorrectly} \\
independent effects, $\tau=15$         & $-32.823$ & no (outside window) \\
\bottomrule
\end{tabular}
\end{center}
\end{table}

\begin{table}[t]
\caption{Rejection rates at $\alpha=0.05$ of a prior-predictive MMD test, $20$
repetitions, $200$ permutations. For \emph{equiv} rows, rejecting is a harmless failure to
merge; for the remaining rows, \emph{not} rejecting is a harmful false merge. Normal
versus Student-$t(30)$ is never separated.}
\label{tab:mmd}
\begin{center}
\begin{tabular}{lrrrrrrrrr}
\toprule
& \multicolumn{3}{c}{$J=8$} & \multicolumn{3}{c}{$J=32$} & \multicolumn{3}{c}{$J=128$} \\
\cmidrule(lr){2-4}\cmidrule(lr){5-7}\cmidrule(lr){8-10}
pair & $500$ & $2000$ & $8000$ & $500$ & $2000$ & $8000$ & $500$ & $2000$ & $8000$ \\
\midrule
equiv (cent./non-cent.) & $.10$ & $.10$ & $.05$ & $.10$ & $.00$ & $.05$ & $.15$ & $.10$ & $.00$ \\
equiv (affine)          & $.10$ & $.10$ & $.05$ & $.10$ & $.00$ & $.05$ & $.15$ & $.10$ & $.00$ \\
\midrule
$\tau$ scale $5$ vs $5.5$   & $.20$ & $.25$ & $.55$ & $.10$ & $.25$ & $.70$ & $.10$ & $.20$ & $.95$ \\
$\mu$ scale $5$ vs $5.5$    & $.15$ & $.15$ & $.10$ & $.10$ & $.05$ & $.55$ & $.05$ & $.25$ & $.60$ \\
Normal vs Student-$t(30)$   & $.05$ & $.05$ & $.00$ & $.10$ & $.00$ & $.05$ & $.10$ & $.10$ & $.00$ \\
$\tau$ scale $5$ vs $1$     & $1.0$ & $1.0$ & $1.0$ & $1.0$ & $1.0$ & $1.0$ & $1.0$ & $1.0$ & $1.0$ \\
\bottomrule
\end{tabular}
\end{center}
\end{table}

\begin{table}[t]
\caption{The bound gap across model families, dimension, funnel severity and hierarchy
depth, with the Pareto shape $\hat k$ of the importance weights for each writing. Gap is
$\mathcal L_1^{\text{non-cent}} - \mathcal L_1^{\text{cent}}$, so a negative gap means the
\emph{centered} writing is tighter. Two seeds per configuration. Representative rows appear as Table~\ref{tab:breadth} in the body.}
\label{tab:breadthfull}
\begin{center}
\setlength{\tabcolsep}{5pt}
\begin{tabular}{lrrrr}
\toprule
configuration & gap (nats) & weight ratio & $\hat k$ cent. & $\hat k$ non-cent. \\
\midrule
\multicolumn{5}{l}{\emph{likelihood family} ($J=8$)}\\
Gaussian (eight schools)      & $1.806$  & $6.09\times$  & $0.90$ & $0.64$ \\
hierarchical logistic         & $-0.734$ & $0.48\times$  & $0.74$ & $0.83$ \\
hierarchical Poisson          & $-0.880$ & $0.41\times$  & $0.79$ & $0.91$ \\
\midrule
\multicolumn{5}{l}{\emph{dimension} (Gaussian, synthetic data)}\\
Gaussian (synthetic), $J=8$   & $2.026$  & $7.58\times$  & $0.91$ & $0.72$ \\
Gaussian (synthetic), $J=16$  & $2.560$  & $12.94\times$ & $0.96$ & $0.71$ \\
Gaussian (synthetic), $J=32$  & $2.399$  & $11.01\times$ & $0.89$ & $0.66$ \\
Gaussian (synthetic), $J=64$  & $3.464$  & $\mathbf{31.93\times}$ & $\mathbf{1.02}$ & $0.64$ \\
\midrule
\multicolumn{5}{l}{\emph{funnel severity} (Gaussian $J=8$; larger $\sigma$ = less information on $\tau$)}\\
$\sigma=4$                    & $-1.367$ & $0.25\times$  & $0.56$ & $0.89$ \\
$\sigma=8$                    & $1.145$  & $3.14\times$  & $0.85$ & $0.68$ \\
$\sigma=16$                   & $2.059$  & $7.84\times$  & $0.91$ & $0.72$ \\
$\sigma=32$                   & $2.219$  & $9.20\times$  & $0.91$ & $0.78$ \\
\midrule
\multicolumn{5}{l}{\emph{depth}}\\
three-level hierarchy         & $1.407$  & $4.08\times$  & $0.94$ & $0.78$ \\
\bottomrule
\end{tabular}
\end{center}
\end{table}

\section{Use of large language models}
\label{app:ai}

Language models enter this work in two distinct roles, which we separate because only one of
them is a research tool.

\paragraph{As an object of study.} All $360$ probabilistic programs analysed in
Section~\ref{sec:frequency} were generated by the models named there. They are experimental
data rather than authored content: completions were classified exactly as returned, with no
filtering, editing or regeneration, and the prompt and classification rules are given in
Appendix~\ref{app:details}.

\paragraph{As a tool.} We used large language models to help survey the related literature,
to polish the wording of the manuscript, and as an aid while drafting the proofs. References
surfaced this way were read and checked against the primary sources before being cited, and
we verified the claims we attribute to prior work rather than relying on summaries. The
proofs were checked by the authors and, in addition, verified numerically: the identity of
Proposition~\ref{prop:bf}, the tightness of Theorem~\ref{thm:tv} and the absence of
violations over random configurations are all reported in Appendix~\ref{app:cost}.

\paragraph{Responsibility.} The authors take full responsibility for the final content of
this paper, including any text produced with the assistance of generative AI.

\end{document}